\def\eqref#1{equation~\ref{#1}}
\def\1{\bm{1}}
\DeclareMathAlphabet{\mathsfit}{\encodingdefault}{\sfdefault}{m}{sl}
\SetMathAlphabet{\mathsfit}{bold}{\encodingdefault}{\sfdefault}{bx}{n}
\newcommand{\R}{\mathbb{R}}
\newtheorem{theorem}{Theorem}[section]
\newtheorem{lemma}{Lemma}[section]
\newtheorem{corollary}{Corollary}[section]
\newtheorem{definition}{Definition}[section]
\newcommand{\tr}{\operatorname{trace}}
\DeclareMathSymbol{\shortminus}{\mathbin}{AMSa}{"39}
\newcommand{\bGaminds}[1]{\textcolor{blue}{\Gamma_{#1}}}
\newcommand{\bGam}{\textcolor{blue}{\Gamma}}
\newcommand{\ppart}[1]{\textcolor{violet}{\max\Bigl(}{#1}\textcolor{violet}{,0\Bigr)}}
\newcommand{\ind}[1]{\mathbbm{1}_{\{{#1}\}}}  
\newcommand{\defeq}{\vcentcolon=}
\newcommand{\eqdef}{=\vcentcolon}  
\newcommand{\PD}[1]{\mathbf{S}^{#1}_{++}}
\newcommand{\PSD}[1]{\mathbf{S}^{#1}_{+}}
\newcommand{\mhalf}{{\shortminus \sfrac{1}{2}}}
\newcommand{\half}{{\sfrac{1}{2}}}  
\title{A Generalized EigenGame With Extensions to Deep Multiview Representation Learning}
\author{James Chapman \& Ana Lawry Aguila\\
Department of Computer Science\\
University College London\\
90 High Holborn \\
\texttt{\{james.chapman.19,ana.aguila.19\}@ucl.ac.uk} \\
\And
Lennie Wells\\
Faculty of Mathematics\\
University of Cambridge\\
Wilberforce Road\\
\texttt{ww347@cam.ac.uk}
}
\begin{document}

\maketitle

\begin{abstract}
Generalized Eigenvalue Problems (GEPs) encompass a range of interesting dimensionality reduction methods. Development of efficient stochastic approaches to these problems would allow them to scale to larger datasets. Canonical Correlation Analysis (CCA) is one example of a GEP for dimensionality reduction which has found extensive use in problems with two or more views of the data. Deep learning extensions of CCA require large mini-batch sizes, and therefore large memory consumption, in the stochastic setting to achieve good performance and this has limited its application in practice. Inspired by the Generalized Hebbian Algorithm, we develop an approach to solving stochastic GEPs in which all constraints are softly enforced by Lagrange multipliers. Then by considering the integral of this Lagrangian function, its pseudo-utility, and inspired by recent formulations of Principal Components Analysis and GEPs as games with differentiable utilities, we develop a game-theory inspired approach to solving GEPs. We show that our approaches share much of the theoretical grounding of the previous Hebbian and game theoretic approaches for the linear case but our method permits extension to general function approximators like neural networks for certain GEPs for dimensionality reduction including CCA which means our method can be used for deep multiview representation learning. We demonstrate the effectiveness of our method for solving GEPs in the stochastic setting using canonical multiview datasets and demonstrate state-of-the-art performance for optimizing Deep CCA.
\end{abstract}

\section{Introduction}

A Generalised Eigenvalue Problem (GEP) is defined by two symmetric\footnote{or, more generally, Hermitian \cite{stewart_matrix_1990}} matrices $A,B\in \mathbb{R}^{d\times d}$. They are usually characterised by the set of solutions to the equation 
\begin{align}\label{eq:igep}
Aw=\lambda Bw
\end{align}
with $\lambda \in \R, w \in \R^d$, called (generalised) eigenvalue and (generalised) eigenvector respectively.
Note that by taking $B=I$ we recover the standard eigenvalue problem. We shall only be concerned with the case where $B$ is positive definite to avoid degeneracy; in this case one can find a basis of eigenvectors spanning $\R^d$. Without loss of generality, take $w_1,\dots,w_d$ such a basis of eigenvectors, with decreasing corresponding eigenvalues $\lambda_1\geq \dots \geq \lambda_d$. 
The following variational characterisation \cite{stewart_matrix_1990} provides a useful alternative, iterative definition: $w_k$ solves
\begin{align}\label{eq:consgep}
\max_{w\in\R^d}w^{\top}Aw \quad\text{ subject to }w^{\top}Bw=1, \: w^{\top}Bw_j = 0 \text{ for } j=1,\dots,k \shortminus 1.
\end{align}
There is also a simpler (non-iterative) variational characterisation for the top-$k$ subspace (that spanned by $\{w_1,\dots,w_k\})$, namely 
\begin{equation}\label{eq:subspace-gep}
    \max_{W \in \R^{d\times k}} \tr(W^{\top} A W) \quad \text{ subject to }\:  W^{\top} B W = I_k
\end{equation}
again see \cite{stewart_matrix_1990}\footnote{We need a stronger treatment of this result so give a self contained statement and proof in Proposition \ref{prop:constr-charac}, with associated discussion.}; the drawback of this characterisation is it only recovers the subspace and not the individual eigenvectors. We shall see that these two different characterisations lead to different algorithms for the GEP.

Many classical dimensionality reduction methods can be viewed as GEPs including but not limited to Principal Components Analysis \citep{hotelling1933analysis}, Partial Least Squares \cite{haenlein2004beginner}, Fisher Discriminant Analysis \cite{mika1999fisher}, and Canonical Correlation Analysis (CCA) \citep{hotelling1992relations}. 

Each of the problems above is defined at a population level, using population values of the matrices $A,B$, usually functionals of some appropriate covariance matrices. The practical challenge is the sample version: to estimate the population GEP where we only have estimates of $A,B$ through some finite number of samples $(z_n)_{n=1}^N$; classically, one just solves the GEP with $A,B$ estimated by plugging in the relevant sample covariance matrices.

However for very large datasets, the dimensionality of the associated GEPs makes it memory and compute intensive to compute solutions using existing full-batch algorithms; these are usually variants of the singular value decomposition where successive eigenvalue eigenvector pairs are calculated sequentially by deflation \cite{mackey2008deflation} and so cannot exploit parallelism over the eigenvectors. 

This work was motivated in particular by CCA, a classical method for learning representations of data with two or more distinct views: a problem known as multiview (representation) learning. Multiview learning methods are useful for learning representations of data with multiple sets of features, or `views'. CCA identifies projections or subspaces in at least two different views that are highly correlated and can be used to generate robust low-dimensional representations for a downstream prediction task, to discover relationships between views, or to generate representations of a view that is missing at test time. CCA has been widely applied across a range of fields such as Neuroimaging \citep{Krishnan2011}, Finance \citep{cassel2000measurement}, and Imaging Genetics \citep{Hansen2021}.

Deep learning functional forms are often extremely effective for modelling extremely large datasets as they have more expressivity than linear models and scale better than kernel methods. While PCA has found a natural stochastic non-linear extension in the popular autoencoder architecture \citep{kramer1991nonlinear}, applications of Deep CCA \citep{andrew2013deep} have been more limited because estimation of the constraints in the problem outside the full batch setting are more challenging to optimize. In particular, DCCA performs badly when its objective is maximized using stochastic mini-batches. This is unfortunate as DCCA would appear to be well suited to a number of multiview machine learning applications as a number of successful deep multiview machine learning \citep{Suzuki2022} and certain self-supervised learning approaches \citep{zbontar2021barlow} are designed around similar principals to DCCA; to maximize the consensus between non-linear models of different views \cite{nguyen2020multiview}.

Recently, a number of algorithms have been proposed to approximate GEPs \cite{arora2012stochastic}, and CCA specifically \cite{bhatia2018gen}, in the `stochastic' or `data-streaming' setting; these can have big computational savings. Typically, the computational complexity of classical GEP algorithms is $\mathcal{O}\left((N + k)d^2\right)$; by exploiting parallelism (both between eigenvectors and between samples in a mini-batch), we can reduce this down to $\mathcal{O}\left(d k \right)$ \citep{arora2016stochastic}. Stochastic algorithms also introduce a form of regularisation which can be very helpful in these high-dimensional settings.

A key motivation for us was a recent line of work reformulating top-k eigenvalue problems as games \citep{gemp20,gemp2021}, later extended to GEPs in \cite{gemp2022generalized}. We shall refer to these ideas as the `Eigengame framework'. Unfortunately, their GEP extension is very complicated, with 3 different hyperparameters; this complication is needed because they constrain their estimates to lie on the unit sphere, which is a natural geometry for the usual eigenvalue problem but not natural for the GEP. By replacing this unit sphere constraint with a Lagrange multiplier penalty, we obtain a much simpler method (GHA-GEP) with only a single hyperparameter; this is a big practical improvement because the convergence of the algorithms is mostly sensitive to step-size (learning rate) parameter \cite{li2021nonconvex}, and it allows a practitioner to explore many more learning rates for the same computational budget. We also propose a second class of method ($\delta$-EigenGame) defined via optimising explicit utility functions, rather than being defined via updates, which enjoys the same practical advantages and similar performance.
These utilities give unconstrained variational forms for GEPs that we have not seen elsewhere in the literature and may be of independent interest; their key practical advantage is that they only contain linear factors of $A,B$ so we can easily obtain unbiased updates for gradients.
The other key advantage of these utility-based methods is that they can easily be extended to use deep learning to solve problems motivated by GEPs. In particular we propose a simple but powerful method for the Deep CCA problem.

\subsection{Notation}

We have collected here some notational conventions which we think may provide a helpful reference for the reader. We shall always have $A,B \in \R^{d\times d}$. We denote (estimates to or dummy variables for) the $i^\text{th}$ generalised eigenvectors by $w_i$; and denote CCA directions $u_i \in \R^p,v_i \in \R^q$. The number of directions we want to estimate will be $k$. For stochastic algorithms, we denote batch-size by $b$. We use $\langle \cdot, \cdot \rangle$ for inner products; implicitly we always take Euclidean inner product over vectors and Frobenius or `trace' inner product for matrices.

\section{A Constraint-Free Algorithm for GEPs}

Our first proposed method solves the general form of the generalized eigenvalue problem in equation (\ref{eq:consgep}) for the top-k eigenvalues and their associated eigenvectors in parallel. We are thus interested in both the top-k subspace problem and the top-k eigenvectors themselves. Our method extends the Generalized Hebbian Algorithm to GEPs, and we thus refer to it as GHA-GEP.

In the full-batch version of our algorithm when $A$ is known to be positive semidefinite, each eigenvector estimate has updates with the form

\begin{align}
\Delta_{i}^{\text{GHA-GEP-PSD}}
&=
\overbrace{A \hat{w}_{i}}^{\text{Reward}} - \overbrace{\sum_{j \leq i}B\hat{w}_{j}\textcolor{blue}{\left(\hat{w}_{j}^{\top} A \hat{w}_{i}\right)}}^{\text{Penalty}} 
&&=
\overbrace{A \hat{w}_{i}}^{\text{Reward}} - \overbrace{B\hat{w}_{i}\textcolor{blue}{\left(\hat{w}_{i}^{\top} A \hat{w}_{i}\right)}}^{\text{Variance Penalty}} - \overbrace{\sum_{j < i}B\hat{w}_{j}\textcolor{blue}{\left(\hat{w}_{j}^{\top} A \hat{w}_{i}\right)}}^{\text{Orthogonality Penalty}} & \nonumber\\ 
&=A \hat{w}_{i} - \sum_{j \leq i}B\hat{w}_{j}\textcolor{blue}{\Gamma_{ij}}
&&=
A \hat{w}_{i} - B\hat{w}_{i}\bGaminds{ii} - \sum_{j < i}B\hat{w}_{j}\bGaminds{ij}&
\label{eq:psdupdate}
\end{align}

where $\hat{w_j}$ is our estimator to the eigenvector associated with the $j^{\text{th}}$ largest eigenvalue and in the stochastic setting, we can replace $A$ and $B$ with their unbiased estimates $\hat{A}$ and $\hat{B}$. 
We will use the notation $\textcolor{blue}{\Gamma_{ij}=\left(\hat{w}_{j}^{\top} A \hat{w}_{i}\right)}$ to facilitate comparison with previous work in Appendix \ref{sec:previousworkcomparison}.  $\textcolor{blue}{\Gamma_{ij}}$ has a natural interpretation as Lagrange multiplier for the constraint $w_i^\top B w_j = 0$; indeed, \citet{chen2019constrained} prove that $\left(\hat{w}_{j}^{\top} A \hat{w}_{i}\right)$ is the optimal value of the corresponding Lagrange multiplier for their GEP formulation; we summarise this derivation in Appendix \ref{sec:chen} for ease of reference.
We also label the terms as rewards and penalties to facilitate discussion with respect to the EigenGame framework in Appendix \ref{sec:mueigengame} and recent work in self-supervised learning in Appendix \ref{sec:ssl}.

However, when $A$ has negative eigenvalues, the iteration defined in (\ref{eq:psdupdate}) can `blow-up' from certain initial values. We therefore propose the following modification:
\begin{equation}\label{eq:ourupdate}
        \Delta_{i}^{\text{GHA-GEP}}=
        A \hat{w}_{i} - B\hat{w}_{i}\,\textcolor{violet}{\max(}\bGaminds{ii}\textcolor{violet}{,0)} - \sum_{j < i}B\hat{w}_{j}\bGaminds{ij}
    \end{equation}
Note that this reduces to (\ref{eq:psdupdate}) when $A$ is positive semi-definite. The following proposition, proved in Appendix \ref{app:gha-gep}, justifies this choice of update.

\begin{restatable}[Unique stable stationary point]{proprep}{gepghastationary}
\label{prop:gep-gha-stat}
Given exact parents and assuming the top-k generalized eigenvalues
of $A$ and $B$ are distinct and positive, the only stable stationary point of the iteration defined by (\ref{eq:ourupdate}) is the eigenvector $w_i$ (up to sign).
\end{restatable}

\subsection{Defining Utilities and Pseudo-Utilities with Lagrangian Functions}

Now observe that the updates (\ref{eq:psdupdate}) can be written as the gradients of a Lagrangian pseudo-utility function:
\begin{align}\label{eq:lagrangeutils}
\mathcal{PU}_{i}^{\text{GHA-GEP-PSD}}(w_i | w_{j<i}, \bGam ) 
&=\tfrac{1}{2} \hat{w}_{i}^{\top}A\hat{w}_{i}
+\tfrac{1}{2} \textcolor{blue}{\Gamma_{ii}} (1 - \hat{w}_{i}^{\top}B\hat{w}_{i})
-\sum_{j< i} \textcolor{blue}{\Gamma_{ij}} \hat{w}_{j}^{\top}B\hat{w}_{i}.
\end{align}

We show how this result is closely related to the pseudo-utility functions in \citet{chen2019constrained} and suggests an alternative pseudo-utility function for the work in \citet{gemp2021} in Appendix \ref{sec:pseudo-utils} which, unlike the original work, does not require stop gradient operators.

If we plug in the relevant $w_i$ and $w_j$ terms into $\bGam$, we obtain the following utility function:
\begin{align}\label{eq:game-utility-psd}
\mathcal{U}_{i}^{\delta\text{-PSD}}(w_i ; w_{j<i}) 
&=\tfrac{1}{2}\hat{w}_{i}^{\top}A\hat{w}_{i}
+\tfrac{1}{2}\hat{w}_{i}^{\top}A\hat{w}_{i}\left(1-\hat{w}_{i}^{\top}B\hat{w}_{i}\right)
-\sum_{j< i} \hat{w}_{i}^{\top}A\hat{w}_{j} \hat{w}_{j}^{\top}B\hat{w}_{i} \nonumber \\
&= (\hat{w}_{i}^{\top}A\hat{w}_{i})
- \tfrac{1}{2} (\hat{w}_{i}^{\top}A\hat{w}_{i})(\hat{w}_{i}^{\top}B\hat{w}_{i})
- \sum_{j< i} (\hat{w}_{i}^{\top}A\hat{w}_{j})( \hat{w}_{j}^{\top}B\hat{w}_{i})
\end{align}

Again, we apply a modification to prevent blow-up when $A$ has negative eigenvalues, giving utility
\begin{align}\label{eq:game-utility}
\mathcal{U}_{i}^{\delta}(w_i ; w_{j<i})
&= (\hat{w}_{i}^{\top}A\hat{w}_{i})
- \tfrac{1}{2} \textcolor{violet}{\max(}(\hat{w}_{i}^{\top}A\hat{w}_{i})\textcolor{violet}{,0)}\,(\hat{w}_{i}^{\top}B\hat{w}_{i})
- \sum_{j< i} (\hat{w}_{i}^{\top}A\hat{w}_{j})( \hat{w}_{j}^{\top}B\hat{w}_{i})
\end{align}

A remarkable fact is that this utility function actually defines a solution to the GEP problem! We prove the following consistency result in Appendix \ref{nashproof}.

\begin{restatable}[Unique utility maximiser]{proprep}{deltanash}
\label{prop:delta-nash}
Assuming the top-$i$ generalized eigenvalues of the GEP (\ref{eq:consgep}) are positive and distinct. Then the unique maximizer of the utility in (\ref{eq:game-utility}) for exact parents is precisely the $i^{th}$ eigenvector (up to sign).
\end{restatable}

This utility function allows us to formalise $\Delta$-EigenGame, whose solution corresponds to the top-k solution of equation (\ref{eq:consgep}).

\begin{definition}
Let $\Delta$-EigenGame be the game with players $i \in \{1,...,k\}$, strategy space $\hat{w}_{i} \in \mathbb{R}^d$, where d is the dimensionality of A and B, and utilities $\mathcal{U}_{i}^{\delta}$ defined in equation (\ref{eq:game-utility}).
\end{definition}

An immediate corollary of Proposition \ref{prop:delta-nash} is:
\begin{corollary}
    The top-$k$ generalized eigenvectors form the unique, strict Nash equilibrium of $\Delta$-EigenGame.
\end{corollary}

Furthermore, the penalty terms in the utility function (\ref{eq:lagrangeutils}) have a natural interpretation as a projection deflation as shown in appendix \ref{sec:defl}.

Next note that it is easy to compute the derivative
\begin{align}\label{eq:utility-update}
    \Delta_{i}^\delta 
    &=
    \frac{\partial \mathcal{U}_{i}^{\delta}(w_i ; w_{j<i})}{\partial w_i} \\
    &=
    2 A\hat{w}_{i}
    - \{ A\hat{w}_{i}(\hat{w}_{i}^{\top}B\hat{w}_{i}) + (\hat{w}_{i}^{\top}A\hat{w}_{i})B \hat{w}_{i}\}
    - \sum_{j< i} \{A\hat{w}_{j}(\hat{w}_{j}^{\top}B\hat{w}_{i}) + (\hat{w}_{j}^{\top}A\hat{w}_{i})B \hat{w}_{j}\}\nonumber\\ 
    &=
    \Delta_i^\text{GHA-GEP} + \{ A\hat{w_i} - \sum_{j\leq i} A\hat{w}_{j}(\hat{w}_{j}^{\top}B\hat{w}_{i})\} \nonumber
\end{align}
We can use these gradients as updates step for an alternative algorithm for the GEP which we call $\delta$-EigenGame (where, consistent with previous work, we use upper case for the game and lower case for its associated algorithm). We can now discuss stochastic versions of the algorithms introduced above, the setting where our methods excel.

\subsection{Stochastic/Data-streaming versions}

This paper is motivated by cases where the algorithm only has access to unbiased sample estimates of $A$ and $B$. These estimates, denoted $\hat{A}$ and $\hat{B}$, are therefore random variables. A nice property of both our proposed GHA-GEP and $\delta$-EigenGame is that $A$ and $B$ appear as multiplications in both of their updates (as opposed to as divisors). This means that we can simply substitute them for our unbiased estimates at each iteration. For the GHA-GEP algorithm this gives us updates based on stochastic unbiased estimates of the gradient

\begin{align}
\hat{\Delta}_{i}^{\text{GHA-GEP}}=&\hat{A} \hat{w}_{i} - \sum_{j \leq i}\hat{B}\hat{w}_{j}\textcolor{blue}{\left(\hat{w}_{j}^{\top} \hat{A} \hat{w}_{i}\right)}.
\label{eq:ourstochasticupdate}
\end{align}

Which we can use to form algorithm \ref{alg:stochastic-GHA-GEP}.

\begin{algorithm}
   \caption{A Sample Based Generalized Hebbian Algorithm for GEP}
   \label{alg:stochastic-GHA-GEP}
\begin{algorithmic}
   \STATE {\bfseries Input:} data stream $Z_t$ consisting of $b$ samples from $z_n$. Learning rate $(\eta_t)_t$. Number of time steps $T$. Number of eigenvectors to compute $k$.
   \STATE {\bfseries Initialise:} $(\hat{w}_i)_{i=1}^K$ with random uniform entries
   \FOR{$t=1$ {\bfseries to} $T$}
    \STATE Construct independent unbiased estimates $\hat{A}$ and $\hat{B}$ from $Z_t$
    \FOR{$i=1$ {\bfseries to} $k$}
        \STATE $\hat{w}_{i} \leftarrow \hat{w}_{i}+\eta_{t} \hat{\Delta}_{i}^{\text{GHA-GEP}}$ 
        \COMMENT{As defined in (\ref{eq:ourstochasticupdate})}
    \ENDFOR
   \ENDFOR
\end{algorithmic}
\end{algorithm}

Likewise we can form stochastic updates for $\delta$-EigenGame

\begin{align}\label{eq:ourstochasticdeltaupdate}
    \Delta_{i}^\delta 
        &=
        2 \hat{A}\hat{w}_{i}
        - \{ \hat{A}\hat{w}_{i}(\hat{w}_{i}^{\top}\hat{B}\hat{w}_{i}) + (\hat{w}_{i}^{\top}\hat{A}\hat{w}_{i})\hat{B} \hat{w}_{i}\}
        - \sum_{j< i} \{\hat{A}\hat{w}_{j}(\hat{w}_{j}^{\top}\hat{B}\hat{w}_{i}) + (\hat{w}_{j}^{\top}\hat{A}\hat{w}_{i})\hat{B} \hat{w}_{j}\}
\end{align}

Which give us algorithm \ref{alg:stochastic-utility-algo}.

\begin{algorithm}
   \caption{A Sample Based $\delta$-EigenGame for GEP}
   \label{alg:stochastic-utility-algo}
\begin{algorithmic}
   \STATE {\bfseries Input:} data stream $Z_t$ consisting of $b$ samples from $z_n$. Learning rate $(\eta_t)_t$. Number of time steps $T$. Number of eigenvectors to compute $k$.
   \STATE {\bfseries Initialise:} $(\hat{w}_i)_{i=1}^K$ with random uniform entries
   \FOR{$t=1$ {\bfseries to} $T$}
       \STATE Construct independent unbiased estimates $\hat{A}$ and $\hat{B}$ from $Z_t$
        \FOR{$i=1$ {\bfseries to} $k$}
            \STATE $\hat{w}_{i} \leftarrow \hat{w}_{i}+\eta_{t} \Delta_{i}^{\delta}$ 
            \COMMENT{As defined in (\ref{eq:ourstochasticdeltaupdate})}
        \ENDFOR
   \ENDFOR
\end{algorithmic}
\end{algorithm}

Furthermore, the simplicity of the form of the updates means that, in contrast to previous work, our updates in the stochastic setting require only one hyperparameter - the learning rate.

\subsection{Complexity and Implementation}

For the GEPs we are motivated by, and in particular for CCA, $\hat{A}$ and $\hat{B}$ are low rank matrices (specifically, they have at most rank $b$ where $b$ is the mini-batch size). This means that, like previous variants of EigenGame, our algorithm has a per-iteration cost of $\mathcal{O}(bdk^2)$. We can similarly leverage parallel computing in both the eigenvectors (players) and data to achieve a theoretical complexity of $\mathcal{O}(dk)$. 

A particular benefit of our proposed form is that we only require one hyperparameter which makes hyperparameter tuning particularly efficient. This is particularly important as prior work has demonstrated that methods related to the stochastic power method are highly sensitive to the choice of learning rate \cite{li2021nonconvex}. Indeed, by using a decaying learning rate the user can in principle run our algorithm just once to a desired accuracy given their computational budget. This is in contrast to recent work proposing an EigenGame solution to stochastic GEPs \citep{gemp2022generalized} which requires three hyperparameters.

\section{Application to CCA and Extension to For Deep CCA}

Previous EigenGame approaches have not been extended to include deep learning functions. \citet{gemp20} noted that the objectives of the players in $\alpha$-EigenGame were all generalized inner products which should extend to general function approximators. However, it was unclear how to translate the constraints in previous EigenGame approaches to the neural network setting. In contrast, we have shown that our work is constraint free but can still be written completely as generalized inner products for certain GEPs and, in particular, dimensionality reduction methods like CCA. 

\subsection{Canonical Correlation Analysis}\label{sec:cca}
Suppose we have vector-valued random variables $X,Y \in \R^p,\R^q$ respectively. Then CCA \citep{hotelling1992relations} defines a sequence of pairs of `canonical directions' $(u_i,v_i)\in \R^{p+q}$ by the iterative maximisations
\begin{align}\label{eq:conscca}
\max_{u\in\R^p,v\in\R^q} \operatorname{Cov}(u^\top X,v^\top Y) 
\quad\text{ subject to } 
& \operatorname{Cov}(u^\top X) =  \operatorname{Cov}(v^\top Y) =1, \\
& \operatorname{Cov}(u^\top X,u_j^\top X) = \operatorname{Cov}(v^\top Y, v_j^\top Y)=0 \text{ for } j<i. \nonumber
\end{align}

Now write $\operatorname{Cov}(X)=\Sigma_{XX},\operatorname{Cov}(Y) = \Sigma_{YY}, \operatorname{Cov}(X,Y) = \Sigma_{XY}$. It is straightforward to show (\cite{borga_learning_1998}) that CCA corresponds to a GEP with 

\begin{equation}\label{eq:cca-GEV}
	A = \begin{pmatrix}0 &\Sigma_{XY} \\ \Sigma_{YX} & 0\end{pmatrix}, \qquad
	B = \begin{pmatrix}\Sigma_{XX} & 0 \\ 0 & \Sigma_{YY}\end{pmatrix}, \qquad
	w =\begin{pmatrix}	u \\ v\end{pmatrix}, \qquad
	d=p+q.
\end{equation}

For the sample version of CCA, suppose we have observations $(x_n,y_n)_{n=1}^N$, which have been pre-processed to have mean zero. Then the classical CCA estimator solves the GEV above with covariances replaced by sample covariances \cite{anderson_introduction_2003}. To define our algorithm in the stochastic case, suppose that at time step $t$ we define $\hat{A}_{t},\hat{B}_{t}$ by plugging sample covariances of the mini-batch at time $t$. 

\subsection{\texorpdfstring{$\delta$}{delta}-EigenGame for CCA}
We defined CCA by maximising correlation between linear functionals of the two views of data; we can extend this to DCCA by instead considering non-linear functionals defined by deep neural networks. Consider neural networks $f,g$ which respectively map $X$ and $Y$ to a $d$ dimensional subspace. We will refer to the $k^{\text{th}}$ dimension of these subspaces using $f_k(X)$ and $g_k(X)$ where $f(X)=[f_1(X), ..., f_d(X)]$ and $g(X)=[g_1(X), ..., g_d(X)]$. Deep CCA finds $f$ and $g$ which maximize $\operatorname{Corr}(f_i(X),g_i(Y))$ subject to orthogonality constraints. 

To motivate an algorithm, note that (\ref{eq:game-utility}) is just a function of the inner products
\begin{align*}
    \langle\hat{w}_{i},{A}\hat{w}_{j}\rangle
    &=\operatorname{Cov}(u_i^\top X,v_j^\top Y) + \operatorname{Cov}(v_i^\top Y, u_j^\top X)\\
    \langle\hat{w}_{i},{B}\hat{w}_{j}\rangle
    &=\operatorname{Cov}(u_i^\top X,u_j^\top X) + \operatorname{Cov}(v_i^\top Y, v_j^\top Y)
\end{align*}

So replacing $u_i^\top X$ with $f_i(X)$ and $v_i^\top Y$ with $g_i(Y)$, and using the short-hand

\begin{align}
    \tilde{A}_{ij}
    &=\operatorname{Cov}(f_i(X),g_j(Y)) + \operatorname{Cov}(g_i(Y), f_j(X))\\
    \tilde{B}_{ij}
    &=\operatorname{Cov}(f_i(X),f_j(X)) + \operatorname{Cov}(g_i(Y), g_j(Y))
\end{align}
we obtain the objective
\begin{align}
\mathcal{U}_{i}^{\delta}(f_i,g_i |f_{j<i},g_{j<i}) &=2 \tilde{A}_{ii} - \tilde{A}_{ii}\tilde{B}_{ii} - 2 \sum_{j<i} \tilde{A}_{ij}\tilde{B}_{ij}
\end{align}

Next observe by symmetry of matrices $\tilde{A},\tilde{B}$ that if we sum the first $k$ utilities we obtain
\begin{align}
U_k^\text{sum} 
= \sum_{i=1}^k U_i^\delta
&= \sum_{i=1}^k 2 \tilde{A}_{ii} - \sum_{i=1}^k \tilde{A}_{ii} \tilde{B}_{ii} - 2 \sum_{i=1}^k \sum_{j<i} \tilde{A}_{ij}\tilde{B}_{ij} \nonumber \\
&= 2 \tr(\tilde{A}) - \sum_{i,j=1}^k \tilde{A}_{ij}\tilde{B}_{ij} \nonumber\\
&= 2 \tr(\tilde{A}) - \tr(\tilde{A} \tilde{B}^{\top})
= \tr\left(\tilde{A}(2 I_k - \tilde{B})\right) \label{eq:dcca-usum}
\end{align}

The key strength of this covariance based formulation is that we can obtain a full-batch algorithm by simply plugging in the sample covariance over the full batch; and obtain a mini-batch update by plugging in sample covariances on the mini-batch. We define DCCA-EigenGame in algorithm \ref{alg:dccaeigengame}, where we slightly abuse notation: we write mini-batches in matrix form $X_t\in\R^{p \times b},Y_t\in\R^{q\times b}$ and use short hand $f(X_t),f(Y_t)$ to denote applying $f,g$ to each sample in the mini-batch.

\begin{algorithm}[H]
   \caption{DCCA EigenGame}
   \label{alg:dccaeigengame}
\begin{algorithmic}
   \STATE {\bfseries Input:} Stream of data with mini-batch size $b$ $\left(X_t\in \mathbb{R}^{b\times p},Y_t\in \mathbb{R}^{b\times q})\right)$, neural networks $f(X)$, $g(Y)$ parameterized by $\hat{\theta}$ and $\hat{\psi}$, learning rate $\eta$
   \FOR{$t=1$ {\bfseries to} $T$}
        \STATE Construct unbiased estimates $\tilde{A}$ and $\tilde{B}$ from $f(X_t)$ and $g(Y_t)$
        \STATE $\mathcal{U} \leftarrow \tr\left(\tilde{A}(2 I_k - \tilde{B})\right)$
        \STATE $\tilde{\nabla}_{f} \leftarrow \frac{\partial U}{\partial f}, \tilde{\nabla}_{g} \leftarrow \frac{\partial U}{\partial g}$
        \STATE $\hat{\theta}_{t+1} \leftarrow \hat{\theta}_{t}+\eta \tilde{\nabla}_{f}, \hat{\psi}_{t+1} \leftarrow \hat{\psi}_{t}+\eta \tilde{\nabla}_{g}$
   \ENDFOR
\end{algorithmic}
\end{algorithm}

We have motivated a loss function for SGD by a heuristic argument. We now give a theoretical result justifying the choice. Recall  the top-$k$ variational characterisation of the GEP in (\ref{eq:subspace-gep}) was hard to use in practice because of the constraints; we can use this to prove that the form above characterises the GEP. 

\begin{restatable}[Subspace characterisation]{proprep}{subspacecharac}
\label{prop:subspace-charac}
Let $A$ be positive semi-definite. Then the top-$k$ subspace for the GEP (\ref{eq:igep}) can be characterised by 
    \begin{equation}\label{eq:subspace-unconstrained-program}
    \max_{W \in \R^{d\times k}} \tr\left( W^{\top} A W \: (2 \, I_k - W^{\top} B W) \right)
    \end{equation}
\end{restatable}

We prove this result in Appendix \ref{sec:subspace-charac}. We also provide an alternative derivation of the utility of (\ref{eq:dcca-usum}) from the paper of \cite{wang2015stochastic} in Appendix \ref{sec:dcca-alt-deriv}.

\section{Related Work}

In particular we note the contemporaneous work in \citet{gemp2022generalized}, termed $\gamma$-EigenGame, which directly addresses the stochastic GEP setting we have described in this work using an EigenGame-inspired approach. Since their method was designed around the Rayleigh quotient form of GEPs, it takes a different and more complicated form and requires additional hyperparameters in order to remove bias from the updates in the stochastic setting due to their proposed utility function containing random variables in denominator terms. It also isn't clear that their updates are the gradients of a utility function. \citet{meng2021online} developed an algorithm, termed RSG+, for streaming CCA which stochastically approximates the principal components of each view in order to approximate the top-k CCA problem, in effect transforming the data so that $B=I$ to simplify the problem. \citet{arora2017stochastic} developed a Matrix Stochastic Gradient method for finding the top-k CCA subspace. However, the efficiency of this method depends on mini-batch samples of 1 and scales poorly to larger mini-batch sizes. While there have also been a number of approaches to the top-1 CCA problem \citep{li2021nonconvex,bhatia2018gen},
the closest methods in motivation and performance to our work on the linear problem are $\gamma$-EigenGame, SGHA, and RSG+.

The original DCCA \citep{andrew2013deep} was defined by the objective

\begin{align}\label{eq:dccaobj}
\max\operatorname{tracenorm}(\hat{\Sigma}_{XX}^{-1 / 2} \hat{\Sigma}_{XY} \hat{\Sigma}_{YY}^{-1 / 2})
\end{align}

and demonstrated strong performance in multiview learning tasks when optimized with the full batch L-BFGS optimizer \citep{liu1989limited}. However when the objective is evaluated for small mini-batches, the whitening matrices $\hat{\Sigma}_{XX}^{-1 / 2}$ and $\hat{\Sigma}_{YY}^{-1 / 2}$ are likely to be ill-conditioned, causing gradient estimation to be biased.

\citet{wang2015unsupervised} observed that despite the biased gradients, the original DCCA objective could still be used in the stochastic setting for large enough mini-batches, a method referred to in the literature as stochastic optimization with large mini-batches (DCCA-STOL). \citet{wang2015stochastic} developed a method which adaptively approximated the covariance of the embedding for each view in order to whiten the targets of a regression in each view. This mean square error type loss can then be decoupled across samples in a method called non-linear orthogonal iterations (DCCA-NOI). To the best of our knowledge this method is the current state-of-the-art for DCCA optimisation using stochastic mini-batches. 

\section{Experiments}
\label{Experiments}

In this section we replicate experiments from recent work on stochastic CCA and Deep CCA in order to demonstrate the accuracy and efficiency of our method.

\subsection{Stochastic Solutions to CCA}

In this section we compare GHA-GEP and $\delta$-EigenGame to previous methods for approximating CCA in the stochastic setting. We optimize for the top-8 eigenvectors for the MediaMill, Split MNIST and Split CIFAR datasets, replicating \citet{gemp2022generalized,meng2021online} with double the number of components and mini-batch size 128 and comparing our method to theirs. We use the Scipy \citep{virtanen2020scipy} package to solve the population GEPs as a ground truth value and use the proportion of correlation captured (PCC) captured by the learnt subspace as compared to this population ground truth (defined in Appendix \ref{sec:pccdef}). 

Figure \ref{fig:ccaiter} shows that for all three datasets, both GHA-GEP and $\delta$-EigenGame exhibit faster convergence on both a per-iteration basis compared to prior work and likewise in terms of runtime in figure \ref{fig:ccatime}. They also demonstrate comparable or higher PCC at convergence. In these experiments $\delta$-EigenGame was found to outperform GHA-GEP. These results were broadly consistent across mini-batch sizes from 32 to 128 which we demonstrate in further experiments in Appendix \ref{sec:ccaextra}.

The strong performance of GHA-GEP and $\delta$-EigenGame is likely to be because their updates adaptively weight the objective and constraints of the problem and are not constrained arbitrarily to the unit sphere. We further explore the shape of the utility function in Appendix \ref{sec:utilityshape}.

\begin{figure}
     \centering
     \begin{subfigure}[b]{0.32\textwidth}
         \centering
         \includegraphics[width=\textwidth]{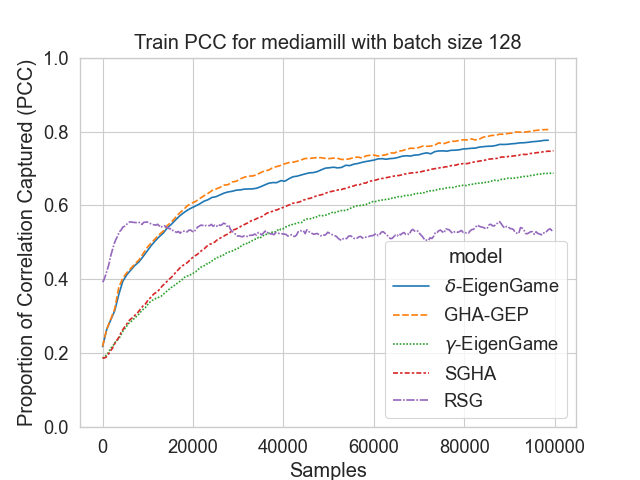}
         \label{fig:ccamediamill}
     \end{subfigure}
     \hfill
     \begin{subfigure}[b]{0.32\textwidth}
         \centering
         \includegraphics[width=\textwidth]{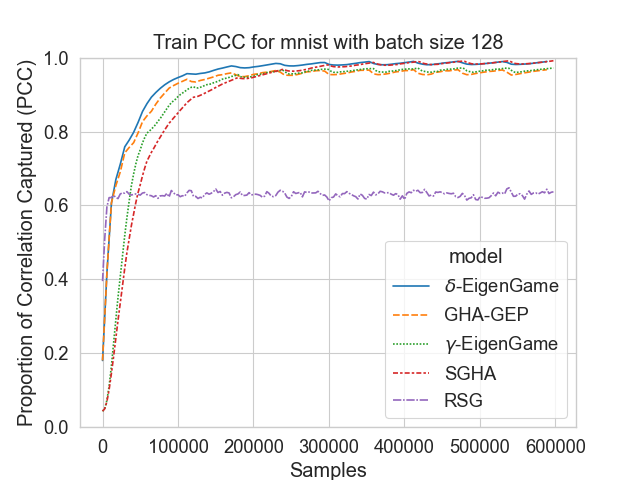}
         \label{fig:ccamnist}
     \end{subfigure}
     \hfill
     \begin{subfigure}[b]{0.32\textwidth}
         \centering
         \includegraphics[width=\textwidth]{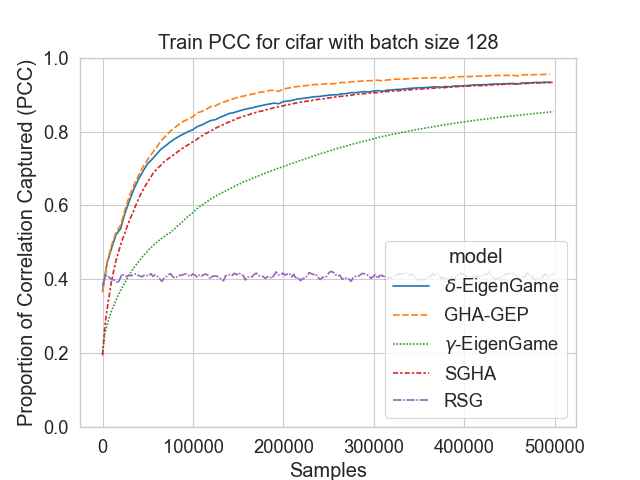}
         \label{fig:ccacifar}
     \end{subfigure}
        \caption{CCA with stochastic mini-batches: proportion of correlation captured with respect to Scipy ground truth b yGHA-GEP and $\delta$-EigenGame vs prior work. The maximum value is 1.}
        \label{fig:ccaiter}
     \centering
     \begin{subfigure}[b]{0.32\textwidth}
         \centering
         \includegraphics[width=\textwidth]{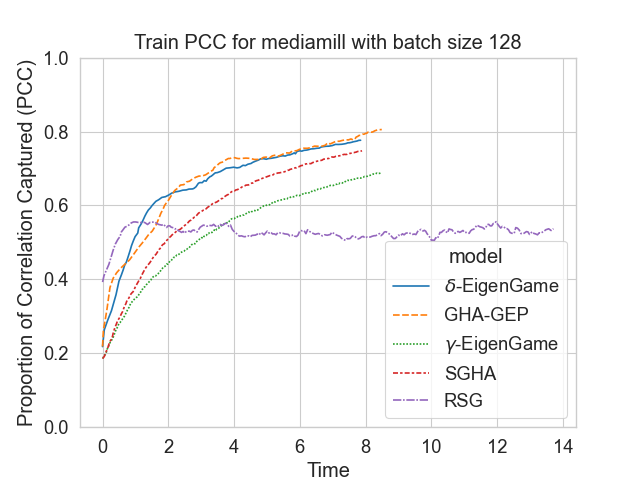}
     \end{subfigure}
     \hfill
     \begin{subfigure}[b]{0.32\textwidth}
         \centering
         \includegraphics[width=\textwidth]{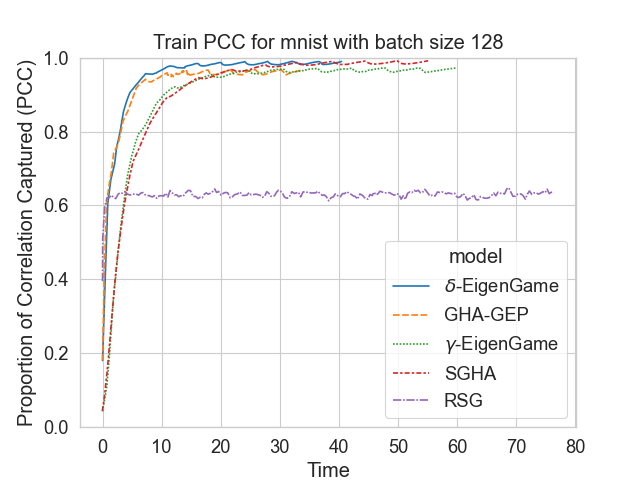}
     \end{subfigure}
     \hfill
     \begin{subfigure}[b]{0.32\textwidth}
         \centering
         \includegraphics[width=\textwidth]{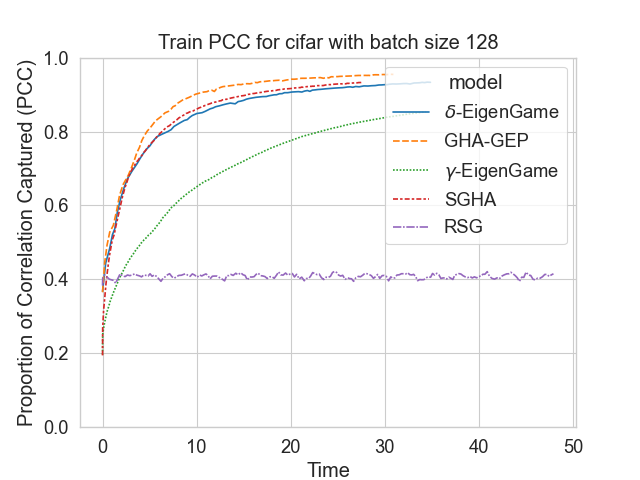}
     \end{subfigure}
        \caption{CCA with stochastic mini-batches: proportion of correlation captured with respect to Scipy ground truth by GHA-GEP and $\delta$-EigenGame vs prior work. The maximum value is 1.}
        \label{fig:ccatime}
\end{figure}

\subsection{Stochastic Solutions to Deep CCA}

In this section we compare DCCA-EigenGame and DCCA-SGHA to previous methods for optimizing DCCA in the stochastic setting. We replicated an experiment from \citet{wang2015stochastic} and compare our proposed methods to DCCA-NOI and DCCA-STOL. Like previous work, we use the total correlation captured (TCC) of the learnt subspace as a metric (defined in Appendix \ref{sec:tccdef}).

In all three datasets, figure \ref{fig:dcca:split} shows that DCCA-EigenGame finds higher correlations in the validation data than all methods except DCCA-STOL with $n=500$ with typically faster convergence in early iterations compared to DCCA-NOI.

\begin{figure}
     \centering
     \begin{subfigure}[b]{0.31\textwidth}
         \centering
         \includegraphics[width=\textwidth]{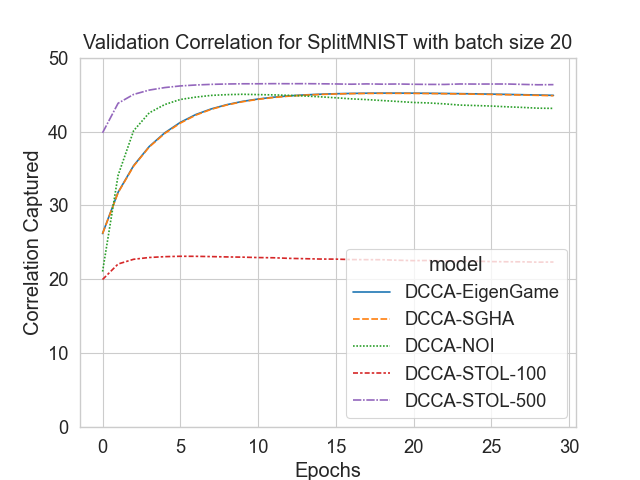}
     \end{subfigure}
     \hfill
     \begin{subfigure}[b]{0.31\textwidth}
         \centering
         \includegraphics[width=\textwidth]{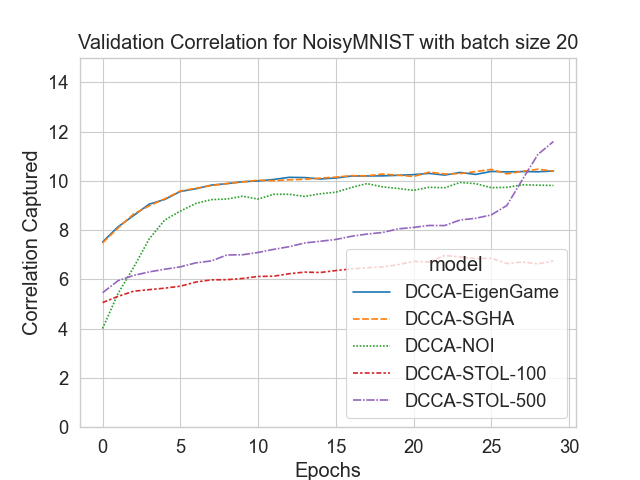}
     \end{subfigure}
     \hfill
     \begin{subfigure}[b]{0.31\textwidth}
         \centering
         \includegraphics[width=\textwidth]{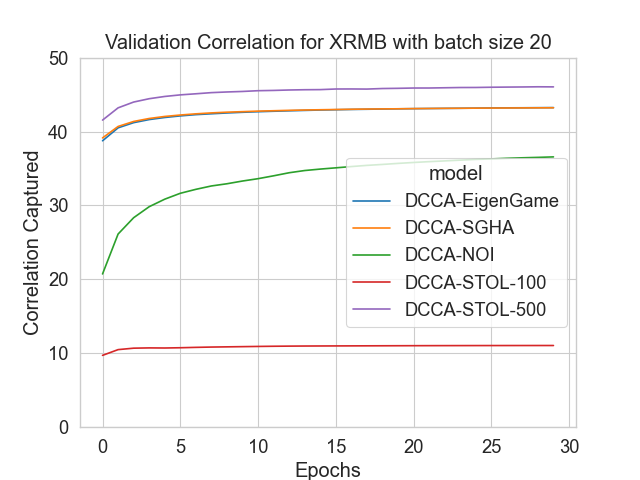}
     \end{subfigure}
     \newline
     \centering
     \begin{subfigure}[b]{0.31\textwidth}
         \centering
         \includegraphics[width=\textwidth]{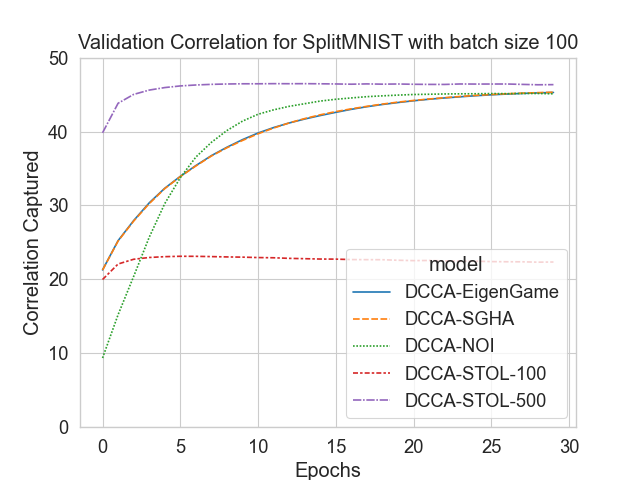}
     \end{subfigure}
     \hfill
     \begin{subfigure}[b]{0.31\textwidth}
         \centering
         \includegraphics[width=\textwidth]{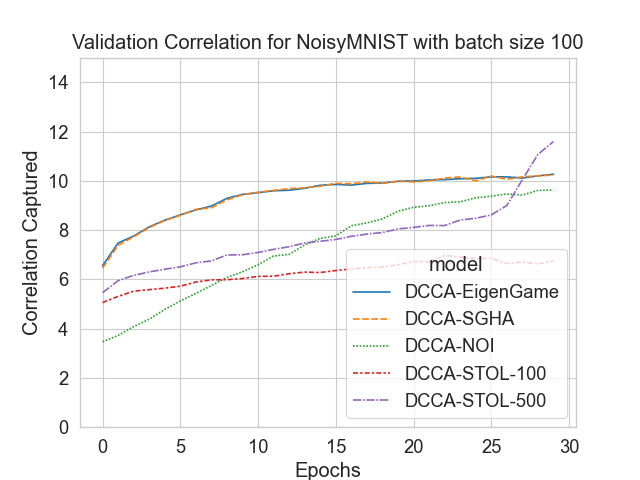}
     \end{subfigure}
     \hfill
     \begin{subfigure}[b]{0.31\textwidth}
         \centering
         \includegraphics[width=\textwidth]{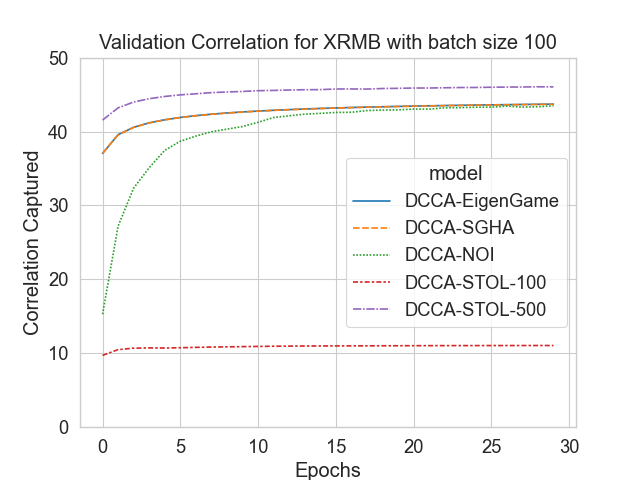}
     \end{subfigure}
        \caption{Total correlation captured by the 50 latent dimensions in the validation data. The maximum value is 50. The top row show results for  mini-batch size 20 and the bottom row show results for mini-batch size 100}
        \label{fig:dcca:split}
\end{figure}

\section{Conclusion}
\label{Conclusion}

We have presented two novel algorithms for optimizing stochastic GEPs. The first, GHA-GEP was based on extending the popular GHA and we showed how it could be understood as optimising a Lagrangian psuedo-utility function. The second, $\delta$-EigenGame, was developed by swapping the Lagrange multipliers to give a proper utility function which allowed us to define the solution of a GEP with $\Delta$-EigenGame. Our proposed methods have simple and elegant forms and require only one choice of hyperparameter, making them extremely practical and both demonstrated comparable or better runtime and performance as compared to prior work.

We also showed how this approach can also be used to optimize Deep CCA and demonstrated state-of-the-art performance when using stochastic mini-batches. We believe that this will allow researchers to apply DCCA to a much wider range of problems.

In future work, we will apply $\delta$-EigenGame to other practically interesting GEPs like Generalized CCA for more than two views and Fisher Discriminant Analysis. We will also explore the extensions of other GEPs to the deep learning case in order to build principled deep representations.

\bibliography{main}
\bibliographystyle{iclr2023_conference}

\appendix

\section{Comparison to Previous Work}\label{sec:previousworkcomparison}
\subsection{Generalized Hebbian Algorithm}\label{sec:gha}

Our update is closely related to the Generalized Hebbian Algorithm (GHA) \citep{sanger1989optimal} for solving the PCA problem with updates:

\begin{align}
\Delta_{i}^{\text{GHA}}=A \hat{w}_{i} - \sum_{j \leq i}\hat{w}_{j}\textcolor{blue}{\left(\hat{w}_{j}^{\top} A \hat{w}_{i}\right)} = A \hat{w}_{i} - \sum_{j \leq i}\hat{w}_{j}\textcolor{blue}{\Gamma_{ij}} \label{eq:sanger}
\end{align}

which was originally designed to be solved sequentially rather than in parallel. Note that for GEPs where $B=I_d$ like PCA, our proposed method collapses exactly to GHA. 

\subsection{Stochastic Generalized Hebbian Algorithm}\label{sec:sgha}

To understand how our method extends GHA to generalized eigenvalue problems we consider the Stochastic Generalized Hebbian Algorithm (SGHA) \citep{chen2019constrained}\footnote{Though the authors called this SGHA, it is rather different to the original proposal, because it is a subspace method rather than an iterative one}. SGHA is derived from the min-max Lagrangian form of (\ref{eq:subspace-gep}):

\begin{align}\label{eq:minmax}
\min_{W \in \R^{d\times k}} \max_{\Gamma\in \R^{k\times k}} \mathcal{L}(W, \textcolor{blue}{\Gamma})=-\operatorname{tr}\left(W^{\top} A W\right)+\left\langle \textcolor{blue}{\Gamma}, W^{\top} B W-I_k\right\rangle
\end{align}

Where $W$ is a matrix that captures the top-$k$ subspace (but not necessarily the top-k eigenvectors) and $\textcolor{blue}{\Gamma}$ is a Lagrange multiplier that enforces the constraint in (\ref{eq:consgep}) along with the $B$-orthogonality of each eigenvector. By solving for the KKT conditions of equation (\ref{eq:minmax}), we have $\textcolor{blue}{\Gamma=\left(W^{\top} A W\right)}$ and the authors propose to combine the primal and dual updates into a single step to give symmetrical updates for each eigenvector:

\begin{align}
\Delta_{i}^{\text{SGHA}}&=A \hat{w}_{i}-\sum_{\textcolor{red}{j}}B\hat{w}_{j}\textcolor{blue}{\left(\hat{w}_{j}^{\top} A \hat{w}_{i}\right)} = A \hat{w}_{i}-\sum_{\textcolor{red}{j}}B\hat{w}_{j}\textcolor{blue}{\Gamma_{ij}}
\end{align}

Where we highlight in red the key difference between our method and SGHA: that there is no hierarchy imposed on eigenvectors, so method can only recover top-$k$ subspace; this is in contrast to our proposal, all eigengame methods, and indeed the original GHA method. As noted by \citet{gemp20}, imposing a hierarchy often appears to improve the stability of the algorithm in experiments and has the additional benefit of returning ordered eigenvectors.

\subsection{\texorpdfstring{$\mu$-EigenGame}{mu-EigenGame}}\label{sec:mueigengame}

Finally, our method is closely related to $\mu$-EigenGame \cite{gemp2021} - though this is only defined for the $B=I_d$ case. Their method restricts estimates to lie on the unit sphere, using Riemannian optimization tools to update in directions defined by 
\begin{align}
\Delta_{i}^{\mu}=A \hat{w}_{i}-\sum_{\textcolor{red}{j<i}} \hat{w}_{j}\textcolor{blue}{\left(\hat{w}_{j}^{\top} A \hat{w}_{i}\right)} = A \hat{w}_{i}-\sum_{\textcolor{red}{j<i}} \hat{w}_{j}\textcolor{blue}{\Gamma_{ij}}
\end{align}
where we again highlight in red the difference compared to our proposal: $\mu$-EigenGame however does not have the $j=i$ term in its penalty (and therefore does not use the \textcolor{blue}{$\Gamma_{ii}$} Lagrange multipliers associated with the unit variance constraint $\hat{w}_{i}^{\top}\hat{w}_{i}=1$).

\section{Proofs and further analysis for sequential algorithms}
\subsection{New Notation}\label{sec:proof-ntn}
For the rest of this section we will find it convenient to introduce some new notation. Firstly, we change notation from the main text and index the normalised solutions to the GEV with superscripts:
$$\langle w^{(i)}, Bw^{(i)}\rangle=1, \quad \langle w^{(i)}, Aw^{(i)}\rangle=\lambda^{(i)} \: \forall{i},$$
while we continue to index our estimates with subscripts. We can write our estimates in this basis to define the coefficients $\hat{w}_{i}=\sum_p \nu^{(p)}_iw^{(p)}$. 
First note this gives
\begin{align*}
    \langle\hat{w}_{i}, Aw^{(j)}\rangle &= \lambda^{(j)} \nu_i^{(j)}\\ \vspace{14pt}
    \langle\hat{w}_{i}, Bw^{(j)}\rangle &= \nu_i^{(j)} 
\end{align*}
It is also convenient to write the matrix $\Lambda = \text{diag}(\{\lambda^{(p)}\}_p)$ and the vector $\nu_i = (\nu_i^{(p)})_{p=1}^d$. We then get the following nice identities:
\begin{align*}
\langle\hat{w}_{i}, A\hat{w}_{j}\rangle &= \sum_p \lambda^{(p)} \nu^{(p)}_i \nu^{(p)}_j = \nu_i^T \Lambda \nu_j\\
\langle\hat{w}_{i}, B\hat{w}_{j}\rangle &= \sum_p \nu^{(p)}_i \nu^{(p)}_j = \nu_i^T \nu_j
\end{align*}

Next define
\begin{align*}
    m_i &= \|\nu_i\|_2^2 = \sum_p (\nu^{(p)}_i)^2 \\
    z^{(j)}_i &=\frac{(\nu^{(j)}_i)^2}{m_i} = \frac{(\nu^{(j)}_i)^2}{\sum_p (\nu^{(p)}_i)^2} \\[6pt]
    \bar{\lambda}_i &= \sum_{j} z_i^{(j)} \lambda^{(j)} = \frac{\nu_i^T \Lambda \nu_i}{m_i} 
\end{align*}
so that the vector $z_i = (z^{(j)}_i)_j$ takes values in the simplex. We can think of $m_i$ as a magnitude, $z_i$ as a direction, and $\bar{\lambda}_i$ as an appropriate `average' value of $\lambda$.
We will see that the utility functions and dynamics analysed below often decompose nicely as functions of these quantities.

\subsection{\texorpdfstring{$\Delta$}{Delta}-eigengame Theory}\label{nashproof} 
We recall \cref{prop:delta-nash}:
\deltanash*

\begin{proof}[Proof of special case, where $A$ is positive semi-definite]
    Since $A$ is positive semi-definite, (\ref{eq:game-utility-psd}) reduces to (\ref{eq:game-utility}). 
    
    First note that using the notation in Section \ref{sec:proof-ntn} 
    \begin{align*}    
        \langle\hat{w}_{i}, A\hat{w}_{i}\rangle &= m_i \bar{\lambda_i} = m_i \sum_j z_i^{(j)}\lambda^{(j)} \\
        \langle\hat{w}_{i}, B \hat{w}_{i}\rangle &= \nu_i^T \nu_i = m_i        
    \end{align*}
    while with exact parents  $\hat{w}_j = w^{(j)}$ for $j<i$ we get
    \begin{align*}
        \langle\hat{w}_{i},B\hat{w}_{j}\rangle\langle\hat{w}_{j},A\hat{w}_{i}\rangle = \lambda^{(j)}(\nu_i^{j})^2 = m_i \lambda^{(j)}  z_i^{(j)}
    \end{align*}
    Thus we can simplify the utility function for player $i$ to:
    \begin{align}
    u_{i}(\hat{w}_{i}|\hat{w}_{j<i})
    &=
    2\langle\hat{w}_{i},A\hat{w}_{i}\rangle-2\sum_{j< i}\langle\hat{w}_{i},B\hat{w}_{j}\rangle\langle\hat{w}_{j},A\hat{w}_{i}\rangle -\langle\hat{w}_{i},B\hat{w}_{i}\rangle\langle\hat{w}_{i},A\hat{w}_{i}\rangle \nonumber \\
    &=2\sum_j m_i \lambda^{(j)} z_i^{(j)} - 2\sum_{j<i}m_i \lambda^{(j)} z_i^{(j)} - m_i \left(\sum_j m_i \lambda^{(j)} z_i^{(j)}\right) \label{eq:utility-simplex-form}\\
    &=2\sum_{j\geq i}m_i \lambda^{(j)} z_i^{(j)} - \sum_j m_i^2 \lambda^{(j)} z_i^{(j)}\nonumber\\
    &=(2m_i-m_{i}^2)\sum_{j\geq i}\lambda^{(j)} z^{(j)}_i - m_{i}^2\sum_{j<i}\lambda^{(j)}z^{(j)}_i \nonumber
    \end{align}
    
    We claim this is maximized precisely when $m_i=1$ $z_i^{(j)}=\delta_{ij}$; this corresponds to $\nu_i^{(i)}=\pm1$ and $\nu_i^{(j)}=0$ for $j\neq i$ and so indeed $\hat{w}_i = \pm w^{(i)}$. To prove the claim we have to consider two cases:
    \begin{itemize}
        \item \textbf{$m_i \in [0,2]$ case:} then $2m_i - m_i^2 > 0$. The objective is just a linear function of $z_i$, with maximal coefficient $(2m_i - m_i^2) \lambda^{(i)} > 0$, so is maximised by taking $z_i^{(j)}=\delta_{ij}$. The utility then reduces to $(2m_i - m_i^2) \lambda^{(i)}$ so is maximised by taking $m_i=1$, giving maximal value of $\lambda^{(i)}$ (assumed strictly positive). Note that this argument still holds even if some of the later $\lambda^{(j)}$'s are negative (recall the first $i$ eigenvalues were assumed positive). 
        \item \textbf{$m_i > 2$ case: } now $2m_i - m_i^2 < 0$ so the coefficients of $z_i^{(j)}$ are all negative. The utility is therefore also negative.
    \end{itemize}
    So the maximum is attained in the case $m_i \in [0,2]$, as required.
\end{proof}
\begin{proof}[Proof of general case, where $A$ may have negative eigenvalues]
    Instead of (\ref{eq:utility-simplex-form}) we now have
    \begin{align*}
        u_{i}(\hat{w}_{i}|\hat{w}_{j<i})
        &=2\sum_{j\geq i} m_i \lambda^{(j)} z_i^{(j)}  - m_i \: \ppart{\sum_j m_i \lambda^{(j)} z_i^{(j)} }
    \end{align*}    
    Note the term inside the maximum function is just $\bar{\lambda}_i m_i$, so has the same sign as $\bar{\lambda}_i$. When this is positive, we just recover the previous expression for utility. We can therefore write
    \begin{align*}\vspace{5pt}
    u_{i}(\hat{w}_{i}|\hat{w}_{j<i})
    =&\ind{\bar{\lambda}_i > 0} \bigl[(2m_i-m_{i}^2)\sum_{j\geq i}\lambda^{(j)} z^{(j)}_i - m_{i}^2\sum_{j<i}\lambda^{(j)}z^{(j)}_i \bigr]\\
    & + \ind{\bar{\lambda}_i \leq 0} [ 2 m_i \sum_{j\geq i} \lambda^{(j)} z_i^{(j)}]
    \end{align*}
    So to handle negative eigenvalues, we require this further split of our optimisation region (now by the values of $z_i$, which determine $\bar{\lambda}_i$, rather than of $m_i$).
    \begin{itemize}
        \item \textbf{$\bar{\lambda}_i>0$ case:} the objective reduces to the objective of the previous case. Our analysis of the case $m_i \in [0,2]$ goes through unchanged (by previous remark) and the previous maximiser $m_i=1,z_i^{(j)}=\delta_{ij}$ does satisfy $\bar{\lambda}_i = \lambda^{(i)}>0$.
        
        In the case where $m_i > 2$, more care is needed. The key observation is that $-m_i^2 < 2m_i - m_i^2$. Hence we again have
        \begin{align*}
            u_i(\hat{w}_i|\hat{w}_{j<i})
            &=
            (2m_i-m_{i}^2)\sum_{j\geq i}\lambda^{(j)} z^{(j)}_i - m_{i}^2\sum_{j<i}\lambda^{(j)}z^{(j)}_i \\
            &\leq 
            (2m_i - m_i^2) \sum_{j}\lambda^{(j)} z^{(j)}_i  \\
            &= (2m_i - m_i^2) \bar{\lambda}_i  
            \leq 0
        \end{align*}
        so we see the maximiser in this combined case is as claimed.
        \item \textbf{$\bar{\lambda}_i\leq 0$ case:} again we show the utility is negative in this regime. Again we use $\bar{\lambda}_i = \sum_{j} \lambda^{(j)} z_i^{(j)}$. Now, because we assumed the first $i$ eigenvalues were positive, we see
        \begin{equation}
            \sum_{j\geq i}  \lambda^{(j)} z_i^{(j)} = \bar{\lambda}_i - \underbrace{\sum_{j<i}  \lambda^{(j)} z_i^{(j)}}_{\geq 0} \leq \bar{\lambda}_i \leq 0
        \end{equation}
        so the utility $2m_i \sum_{j\geq i}  \lambda^{(j)} z_i^{(j)} \leq 0$ also.
    \end{itemize}
    Again, combining the cases we see the maximum value is $\lambda^{(i)}$, and is attained precisely when 
    $m_i=1$ $z_i^{(j)}=\delta_{ij}$.
\end{proof}

Note that in the previous lemma, the utility function took a simple form when we chose the true generalised eigenvectors as a basis; indeed, when using coefficients with respect to the basis, the utility only depended on the generalised eigenvalues and not the basis itself. This simple form shows how our utility interacts naturally with the geometry of the GEV problem. We will now analyse the corresponding simple form of the update steps.

\subsection{GHA-GEP Theory}\label{app:gha-gep}
We recall \cref{prop:gep-gha-stat}:
\gepghastationary*
\begin{proof}[Proof in the special case where $A$ is positive definite]
Again we use the notation of Section \ref{sec:proof-ntn}.
By positive definiteness the update (\ref{eq:ourupdate}) reduces to (\ref{eq:psdupdate}). So we get
\begin{align*}
    \Delta_{i}^{\delta} 
    &= 
    A \hat{w}_{i} - B\hat{w}_{i}\left(\hat{w}_{i}^{\top} A \hat{w}_{i}\right) - \sum_{j < i}B\hat{w}_{j}\left(\hat{w}_{j}^{\top} A \hat{w}_{i}\right) \\
    &=
    A \hat{w}_{i} - \sum_{j \leq i}B\hat{w}_{j}\left(\hat{w}_{j}^{\top} A \hat{w}_{i}\right) \\
    &=
    \sum_p \lambda^{(p)} \nu_i^{(p)} Bw^{(p)} - \sum_{j\leq i} \left(\sum_p \nu_j^{(p)}Bw^{(p)}\right)\left(\nu_i^T \Lambda \nu_j\right) \\[6pt]
    &= 
    \sum_p Bw^{(p)} \Bigl\{ \lambda^{(p)} \nu_i^{(p)} - \sum_{j\leq i} \nu_j^{(p)} \left(\nu_i^T \Lambda \nu_j\right) \Bigr\}
\end{align*}

Now by orthonormality of the $w^{(p)}$, we can equate coefficients and combine to vector form to obtain the update step for the coefficient vectors, which we shall notate by
\begin{align}
    \Delta(\nu)_i 
    &= \Lambda \nu_i - \sum_{j\leq i} \nu_j (\nu_j^{\top} \Lambda \nu_i) \nonumber \\ 
    &= \left(I - \sum_{j < i} \nu_j \nu_j^{\top} \right) \Lambda \nu_i - \nu_i (\nu_i^{\top} \Lambda \nu_i) \label{eq:gen-w-update}
\end{align}

Only now do we consider the assumption of exact parents. This corresponds to the coefficient vectors $\nu_j = e_j \forall j<i$ where $e_j$ is the $j$th unit vector ($(e_j)_k = \delta_{jk}$). Then
$$ \left(I - \sum_{j < i} \nu_j \nu_j^{\top} \right) 
= \begin{pmatrix}
0_{(i-1)\times(i-1)} & 0_{(i-1)\times(i-1)} \\
0_{(d-i+1) \times (i-1)} & I_{(d-i+1) \times (d-i+1)} \\
\end{pmatrix}
$$
So writing $\nu_i^{\top} \Lambda \nu_i = m_i \bar{\lambda}_i$ as before, the update equations for our coefficients become:
\begin{align}
    \Delta(\nu)_i^{(p)} &= - m_i \bar{\lambda}_i \nu_i^{(p)} \quad \text{for }p<i \label{eq:coeff-update1}\\ 
    \Delta(\nu)_i^{(p)} &= (\lambda^{(p)} - m_i \bar{\lambda}_i) \nu_i^{(p)} \quad \text{for }p\geq i \label{eq:coeff-update2}
\end{align}
So if we observe the qualitative behaviour:
\begin{itemize}
    \item For $p<i$ the coefficients are shrunk towards zero (for sufficiently small step sizes)
    \item For $p\geq i$ the coefficients grow/decay depending on their generalised eigenvalue. The larger the eigenvalue, the more the components grow / the less they shrink. So over time only the $i$th component will be selected.
    \item The overall magnitude of the solution shrinks faster when $\bar{\lambda}_i$ is large.
\end{itemize}
A stationary point of the iteration therefore requires $\nu_i^{(p)}=0$ for $p<i$. Then for each $p \geq i$ we must have either $m_i \bar{\lambda}_i=\lambda^{(p)}$ or $\nu_i^{(p)}=0$. Furthermore, $\nu_i^{(i)}$ grows at a faster rate than any of the other components, so provided this was non-zero at initialisation, it will be extracted uniquely. Finally, if $\nu_i^{(p)}=0 \:\forall p \neq i$ but $\nu_i^{(i)}\neq 0$ then by definition of $\bar{\lambda}_i$ we must have $\bar{\lambda}_i = \lambda^{(i)} $ but then zeroing (\ref{eq:coeff-update2}) requires $m_i \bar{\lambda}_i = \lambda^{(i)}$; so combining we get $m_i=1$ and so $\nu_i^{(i)}=\pm 1$, as required.
\end{proof}

\begin{proof}[Proof in the general case, where $A$ may have negative eigenvalues]
    The steps above go through verbatim if we replace the $\nu_i^T \Lambda \nu_i$ terms with $\ppart{\nu_i^T \Lambda \nu_i}$, and correspondingly replace $\bar{\lambda}_i$ terms with $\ppart{\bar{\lambda}_i}$.
\end{proof}

\subsubsection{Discussion of continuous dynamics}
In particular note that in the continuous time case above with exact parents, we can write the solutions to $\tfrac{d}{dt}\nu_i(t) = \Delta(\nu)_i$ with $\Delta(\nu)_i$ as in (\ref{eq:coeff-update1},\ref{eq:coeff-update2}) as
\begin{equation*}
    \nu_i^{(p)}(t) =  \nu_i^{(p)}(0) \exp{\left( \ind{\{p\geq i\}} \lambda^{(p)}t  -  \int_{s=0}^t  m_i(s)\, \ppart{\bar{\lambda}_i(s)}  ds\right)}
\end{equation*}
So when $z_i^{(i)}(0)\neq 0$ (hopefully this is almost sure), the trajectories on the simplex satisfy
\begin{equation*}
    \frac{z_i^{(j)}(t)}{z_i^{(i)}(t)} = \frac{z_i^{(j)}(0)}{z_i^{(i)}(0)} \exp{\left(2 (\mathbbm{1}_{\{p\geq i\}} \lambda^{(p)} - \lambda^{(i)}) t\right)} \to 0 \text{ as } t \to \infty
\end{equation*}
and we do indeed select the correct coefficient vector at an exponential rate. Note that in particular this equation for trajectory on the simplex is decoupled from the trajectory of the norm $m_i$ of the coefficients.

Of course, we are really interested in the case of in-exact parents. We can provide a heuristic argument similar to one of \cite{gemp2021}. Note that the updates for $w_i$ only depend on it's parents $w_{j<i}$, and one can show that an $\mathcal{O}(\epsilon)$ error in the parents propagates to an $\mathcal{O}(\epsilon)$ direction in the child gradient. We know $w_1$ will converge very fast to an arbitrary accuracy; then the gradient for $w_2$ will be very close to that corresponding to exact parents, so will quickly converge to that a similar order of accuracy; then the gradient of $w_3$ will be close to that for exact parents and so on. 

\subsubsection{Extending to stochastic case}
The real case of interest is the discrete time case with mini-batches. \cite{gemp2021} claim that their algorithm converges almost surely provided the step-size sequence $\eta_t$ satisfies
\begin{equation}
    \sum_{t=1}^\infty \eta_t = \infty, \qquad \sum_{t=1}^\infty \eta_t^2 < \infty
\end{equation}
Their key tool is a result on Stochastic Approximation (SA) on Riemannian manifolds \cite{shah_stochastic_2017}. This result extends the now-classical ODE method for analysis of SA schemes to Riemannian manifolds, mostly drawing on the presentation of \cite{borkar_stochastic_2008}. One key difficulty of applying the literature on SA schemes is obtaining stability bounds (saying that the estimates never get too big); this becomes trivial when considering updates on compact manifolds like the unit sphere, which is why \cite{gemp2021} are able to apply their SA tool `out-of-the-box'. In our case, because we do not restrict to the unit sphere, we are able to apply more classical results on SA, for example \cite{kushner_stochastic_2003}, however, we would need to prove the corresponding stability estimates. These should hold intuitively because the variance penalty term should keep estimates small, but they are technically difficult. We note that obtaining such stability estimates has attracted a lot of theoretical attention, but in practice they are often unnecessary because only a bounded subset of the parameter space is physically sensible. This applies to our GEP: it only makes sense to consider vectors $w$ with $w^T B w \leq 1$; and though $B$ is unknown in general, we may well be able to lower bound its eigenvalues, giving a bounded parameter space of interest. We could modify our algorithm to project onto this bounded subset of parameter space. The theory of \cite{kushner_stochastic_2003} can be applied to such a case of projected SA; indeed this theory has the added advantage of requiring weaker conditions on the step-sizes, namely only that
\begin{equation*}
    \sum_{t=1}^\infty \eta_t = \infty, \qquad \eta_t \to 0
\end{equation*}
Note such step-size schedules with slower decay is sometimes observed to give better empirical results in other SA problems \cite{kushner_stochastic_2003}.

We now point out what we understand to be a technical oversight in the proof of almost sure convergence in \cite{gemp2021}. They proof that $w_i$ converges a.s. to true value given fixed exact parents $w_{j<i}$ appears valid; as does the conclusion that $w_i$ converges a.s. to a corresponding optimum given fixed inexact parents; and also does their statement that if parents are close to correct then the corresponding optimum is close to correct. However, this does not say anything about convergence of $w_i$ when the parents are inexact and \textit{varying}; in particular the arguments of \cite{shah_stochastic_2017} do not apply in this case. We believe that it would be possible to fix this oversight by considering a suitable coupling of solution paths starting from an $\epsilon$-covering of a neighbourhood of the true solution. Alternatively, it may be possible to apply the result of \cite{shah_stochastic_2017} to the combined estimates $(w_1,\dots,w_k)$. Similar analysis will be needed for GEP-GHA because we also propose parallel updates for computational speed.

We next note that this SA literature also applies to our $\delta$-eigengame algorithm, whose updates are unbiased estimates to the gradient of the utilities $\mathcal{U}_i^\delta$. In this case, analysis may be more straightforward because we can also apply other existing literature on stochastic gradient descent.

We have not yet had time to make the discussion above more rigorous; we plan to do so in future work. Our algorithms fit very naturally into the well-studied SA framework, and we expect this literature to contain useful intuition and suggestions for implementation, as well as theoretical guarantees.

\section{Proof of subspace characterisations}\label{sec:subspace-charac}
Our main objective in this section is to prove \cref{prop:subspace-charac}, restated below.
\subspacecharac*

We write this more explicitly in Proposition \ref{prop:unconstr-charac}. In the rest of this section we adopt notation common in the optimisation literature. Let $\PSD{d},\PD{d}$ denote the space of $d \times d$ positive semi-definite and positive definite matrices respectively. 
\subsection{Background and statement}

We previously claimed that top-$k$ subspaces to a GEP $(A,B)$ can be characterised by
\begin{equation}
    \max_{W \in \R^{d\times k}} \tr(W^{\top} A W) \quad \text{ subject to }\:  W^{\top} B W = I_k
\end{equation}

It is accepted in the Machine Learning community that this is maximised precisely when $W$ is a top-$k$ subspace (\cite{ghojogh_eigenvalue_2022}, \cite{sameh_trace_1982}); however, we were unable to find a proof of this. In the matrix analysis literature, it is only usually established that
\begin{equation}\label{eq:subspace-gep-ineq}
    \max_{W \in \R^{d\times k}:W^{\top} B W = I_k } \tr(W^{\top} A W) \leq \sum_{i=1}^k \lambda_i = \tr({W^*}^\top A W^*)
\end{equation}
but there is no detail given for the equality case; for example see \cite{stewart_matrix_1990}, \cite{horn_matrix_1985}, \cite{tao_254a_2010}; to remedy this, we provide a self-contained proof to the result, which we state formally in Proposition \ref{prop:constr-charac} below. First, we explicitly define terms introduced in the main text.

\begin{definition}[Top-$k$ subspace]
    Let the GEP $(A,B)$ on $\R^d$ have eigenvalues $\lambda_1 \geq \dots \geq \lambda_d$. Then \textbf{a} top-$k$ subspace is that spanned by some $w_1,\dots,w_k$, where $w_i$ is a $\lambda_i$-eigenvector of $(A,B)$ for $i=1,\dots,k$.
\end{definition}

\begin{definition}[$B$-orthonormality]
    Let $B \in \PD{d}$ and $W\in \R^{d\times k}$. Then we say $W$ has $B$-orthonormal columns if $W^T B W = I_k$.
\end{definition}

\begin{restatable}[Constrained characterisation]{proprep}{constrainedcharac}
\label{prop:constr-charac}
    Let  $B \in \PD{d}$ and let $A \in \R^{d\times d}$ be symmetric with an orthonormal set of (generalised) eigenvectors $u_1,\dots u_d$ with (generalised) eigenvalues $\lambda_1 \geq \dots \lambda_d$. Consider $W \in \R^{d\times k}$ such that $W^T B W = I_k$.
    Then 
    \begin{equation}\label{eq:constrained-program}
        \tr(W^T A W) \leq \sum_{i=1}^k \lambda_i
    \end{equation}
    with equality if and only if $W$ defines a top-$k$ subspace for $(A,B)$.
\end{restatable}

We can now write \cref{prop:subspace-charac} more explicitly as
\begin{restatable}[Unconstrained characterisation]{proprep}{unconstrainedcharac}
\label{prop:unconstr-charac}
    Let $B \in \PD{d}$ and now let $A \in \PSD{d}$ \textbf{be positive semi-definite}. Then 
    \begin{equation}\label{eq:subspace-unconstrained-max-val}
    \max_{W \in \R^{d\times k}} \tr\left( W^{\top} A W \: (2 \, I_k - W^{\top} B W) \right)
    =
    \sum_{i=1}^k \lambda_i
    \end{equation}
    For any maximiser $W$, the columns of $W$ span a top-$k$ subspace for $(A,B)$. 
    Moreover, if the top-$k$ eigenvalues are positive (i.e. $\lambda_k>0$), then $W$ is a maximiser if and only if it has $B$-orthonormal columns spanning a top-$k$ subspace for $(A,B)$.
\end{restatable}

First a quick remark. When $W$ is $B$-orthonormal, the left hand side of (\ref{eq:subspace-unconstrained-max-val}) reduces to that of (\ref{eq:constrained-program}). Moreover, if the columns of $W$ are some (ordered) top-$k$ eigenvectors then $W^T A W = \operatorname{diag}(\lambda_1,\dots,\lambda_k)$ and so the maximum is certainly attained. The hard part is to characterise which $W$ achieve this maximum.

The rest of this section is as follows. In Section \ref{sec:reduce-to-identity} we show it is sufficient to prove Propositions \ref{prop:constr-charac} and \ref{prop:unconstr-charac} for the case where $B=I$. In Section \ref{sec:orthog-case} we give a self contained proof of Proposition \ref{prop:constr-charac}. In Section \ref{sec:reduce-to-orthog} we use this to give a self contained proof of Proposition \ref{prop:unconstr-charac}. Finally in Section \ref{sec:von-Neumann} we give an alternative (and shorter) proof of Proposition \ref{prop:unconstr-charac}  which we believe gives complementary intuition; this uses a powerful (but little known) trace inequality originally due to von Neumann.

\subsection{Reduction to \texorpdfstring{$B=I$}{B=I} cases}\label{sec:reduce-to-identity}
    In this section we show it is sufficient to prove Propositions \ref{prop:constr-charac} and \ref{prop:unconstr-charac} for the case where $B=I$.
    To start, consider an arbitrary $W\in \R^{d \times k}$. Then write $\tilde{W} = B^\half W$ and $\tilde{A} = B^\mhalf A B^\mhalf$. Note the map $W \to \tilde{W}$ is bijective because $B\in\PD{d}$. We will see a correspondence between the variational characterisations over $W$ for $(A,B)$ with those over $\tilde{W}$ for $(\tilde{A},I)$.
    
    Firstly, there is a correspondance of eigenvectors: it is well known that $\tilde{w_i}$ is a $\lambda_i$-eigenvector of $\tilde{A}$ precisely when $w_i \defeq B^\mhalf \tilde{w_i}$ is a $\lambda_i$-generalised eigenvector of $(A,B)$; indeed, if $\tilde{w}_i$ is a $\lambda_i$-eigenvector of $\tilde{A}$ then
    \begin{equation*}
        A (B^{\mhalf} \tilde{w}_i) 
        = B^{\half} B^{\mhalf} A B^{\mhalf} \tilde{w}_i
        = B^{\half} (\tilde{A} \tilde{w}_i)
        = B^{\half} (\lambda_i \tilde{w}_i) 
        = \lambda_i B (B^{\mhalf} \tilde{w}_i)
    \end{equation*}
    so $w_i \defeq B^{\mhalf} \tilde{w}_i$ is a $\lambda_i$-generalised eigenvector of $(A,B)$. The reverse implication is similar. Therefore we see $\tilde{W}$ is a top-$k$ subspace for $\tilde{A}$ if and only if $W$ is a top-$k$ subspace for $(A,B)$.

    We also see correspondence of the objective and orthogonality conditions via
    \begin{align*}
        W^T B W &= \tilde{W}^{\top} B^\mhalf B B^\mhalf \tilde{W} = \tilde{W}^T \tilde{W}\\
        W^{\top} A W &= \tilde{W}^{\top} B^\mhalf A B^\mhalf \tilde{W} = \tilde{W}^{\top} \tilde{A} \tilde{W}
    \end{align*}

    Finally note that $A$ is positive semi-definite if and only if $\tilde{A}$ is postive semi-definite.

    It is now easy to check that all conditions and results in Propositions \ref{prop:constr-charac} and \ref{prop:unconstr-charac} precisely map from statements about $W$ for the GEP $(A,B)$ to statements about $\tilde{W}$ for eigenvalue problem $(\tilde{A},I)$. In the remaining proofs we will drop the tilde's for readability.

\subsection{Proof of constrained characterisation}\label{sec:orthog-case}
We shall work via a series of Lemmas. These build up to Lemma \ref{lem:trace-obj}; the inequality case here corresponds to (\ref{eq:subspace-gep-ineq}) and is well known (\cite{stewart_matrix_1990}, \cite{horn_matrix_1985}, \cite{tao_254a_2010}); however, to analyse the equality case we needed a small extra argument; the key is  Lemma \ref{lem:haemers} which is a special case of Theorem 2.1 from \cite{haemers_interlacing_1995}; the corresponding proof makes extensive use of Raleigh's principle, Lemma \ref{lem:raleigh}. We adapt results to our notation and present all proofs here for completeness. Note that there is no restriction on the eigenvalues of $A$ to be non-negative in this sub-section.

\subsubsection{Results from \texorpdfstring{\cite{haemers_interlacing_1995}}{Haemers (1995)}}
\begin{lemma}[Rayleigh's principle]\label{lem:raleigh}
    Let $A \in \R^{d\times d}$ be symmetric with an orthonormal set of eigenvectors $u_1,\dots, u_d$ with eigenvalues $\lambda_1 \geq \dots \geq \lambda_d$.
    Then 
    \begin{equation*}
        \frac{u^T A u}{u^T u} \geq \lambda_i \text{ if } u \in \langle u_1,\dots u_i \rangle
        \quad \text{ and } \quad
        \frac{u^T A u}{u^T u} \leq \lambda_i \text{ if } u \in \langle u_1,\dots u_{i-1} \rangle^\perp
    \end{equation*}
    In either case, equality implies that $u$ is a $\lambda_i$ eigenvector of $A$.
\end{lemma}
\begin{proof}
    For the first case, suppose $u = \sum_{j=1}^i \nu_j u_j$ for some coefficients $\nu_j \in \R$. Then the Raleigh quotient
    \begin{equation*}
        \frac{u^T A u}{u^T u} = \frac{\sum_{j=1}^i \lambda_j \nu_j^2}{\sum_{j=1}^i \nu_j^2} \geq \frac{\lambda_i \sum_{j=1}^i \nu_j^2}{\sum_{j=1}^i \nu_j^2} = \lambda_i
    \end{equation*}
    with equality if and only if $\nu_j = 0$ whenever $\lambda_j > \lambda_i$; so indeed
    \begin{equation*}
        A u = \sum_{j=1}^i \lambda_j \nu_j u_j = \lambda_i \sum_{j=1}^i \nu_j u_j = \lambda_i u
    \end{equation*}
    Similarly, for the second case, we have $u = \sum_{j=i}^d \nu_j u_j$ so 
    \begin{equation*}
        \frac{u^T A u}{u^T u} = \frac{\sum_{j=i}^d \lambda_j \nu_j^2}{\sum_{j=i}^d \nu_j^2} \leq \frac{\lambda_i \sum_{j=i}^d \nu_j^2}{\sum_{j=i}^d \nu_j^2} = \lambda_i
    \end{equation*}
    with equality if and only if $\nu_j = 0$ whenever $\lambda_j < \lambda_i$; so indeed
    \begin{equation*}
        A u = \sum_{j=i}^d \lambda_j \nu_j u_j = \lambda_i \sum_{j=i}^d \nu_j u_j = \lambda_i u
    \end{equation*}

\end{proof}
\begin{lemma}\label{lem:haemers}
    Let $S \in \R^{d\times k}$ such that $S^T S = I$ and let $A \in \R^{d\times d}$ be symmetric with an orthonormal set of eigenvectors $u_1,\dots u_d$ with eigenvalues $\lambda_1 \geq \dots \lambda_d$.
    Define $C = S^T A S$, and let $C$ have eigenvalues $\mu_1 \geq \dots \geq \mu_k$ with respective eigenvectors $v_1 \dots v_k$.

    Then 
    \begin{itemize}
        \item $\mu_i \leq \lambda_i$ for $i=1,\dots,k$.
        \item if $\mu_i = \lambda_i$ for some $i$ then $C$ has a $\mu_i$-eigenvector $v$ such that $Sv$ is a $\mu_i$-eigenvector of $A$.
        \item if $\mu_i = \lambda_i$ for $i=1,\dots,k$ then $Sv_i$ is a $\mu_i$-eigenvector of $A$ for $i=1,\dots,k$.
    \end{itemize}
\end{lemma}

\begin{proof}
    With $u_1,\dots,u_d$ as above, for each $i \in [1,k]$, take a non-zero $s_i$ in
    \begin{equation}\label{eq:clever-subspace}
        \langle v_1,\dots, v_i \rangle \cap \langle S^T u_1,\dots,S^T u_{i-1}\rangle^\perp
    \end{equation}
    Then $S s_i\in \langle u_1\dots,u_{i-1}\rangle^\perp$, so by Raleigh's principle
    \begin{equation*}
        \lambda_i 
        \geq \frac{(S s_i)^T A (S s_i)}{(Ss_i)^T (S s_i)}
        =    \frac{s_i^T C s_i}{s_i^T s_i}
        \geq \mu_i.
    \end{equation*}

    The second bullet then follows from the equality case of Raleigh's principle: if $\lambda_i = \mu_i$ then $s_i$ and $S s_i$ are $\lambda_i$-eigenvectors of $C$ and $A$ respectively.

    For the final bullet we use induction on $l$: suppose $S v_i = u_i$ for $i=1,\dots,l-1$. Then we may take $s_l = v_l$ in (\ref{eq:clever-subspace}), but by the preceding argument, we saw $S v_l = Ss_l$ is a $\lambda_l$-eigenvector of $A$.
\end{proof}

The following lemma is now straightforward to prove.
\begin{lemma}\label{lem:trace-obj}
    Under the notation of Lemma \ref{lem:haemers}, we have
    \begin{equation*}
        \tr(S^T A S) \leq \sum_{i=1}^k \lambda_i
    \end{equation*}
    with equality if and only if $S$ defines a top-$k$ subspace for $A$.
\end{lemma}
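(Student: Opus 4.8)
The plan is to read the inequality straight off Lemma~\ref{lem:haemers} and then use its finer conclusions to settle the equality case in both directions. First I would note that, writing $C = S^{\top} A S$ with eigenvalues $\mu_1 \geq \dots \geq \mu_k$, cyclicity of trace gives $\tr(S^{\top} A S) = \tr(C) = \sum_{i=1}^k \mu_i$; applying the interlacing bound $\mu_i \leq \lambda_i$ of Lemma~\ref{lem:haemers} term by term immediately yields $\tr(S^{\top} A S) \leq \sum_{i=1}^k \lambda_i$.

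For the forward direction of the equality claim, suppose $\tr(S^{\top} A S) = \sum_{i=1}^k \lambda_i$. Since every term obeys $\mu_i \leq \lambda_i$, this forces $\mu_i = \lambda_i$ for all $i = 1, \dots, k$, so the third bullet of Lemma~\ref{lem:haemers} applies: with $v_1, \dots, v_k$ an orthonormal eigenbasis of $C$, each $S v_i$ is a $\lambda_i$-eigenvector of $A$. Because $S^{\top} S = I_k$ the vectors $S v_1, \dots, S v_k$ are orthonormal, hence linearly independent, and they span the column space of $S$; being $\lambda_i$-eigenvectors they exhibit $\operatorname{col}(S)$ as a top-$k$ subspace in the sense of the definition.

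For the converse, I would first observe that $\tr(S^{\top} A S) = \tr(A S S^{\top}) = \tr(A P_V)$ depends only on the column space $V = \operatorname{col}(S)$, where $P_V = S S^{\top}$ is the orthogonal projector onto $V$. If $V$ is a top-$k$ subspace then it is spanned by eigenvectors $w_1, \dots, w_k$ with $A w_i = \lambda_i w_i$, so $V$ is $A$-invariant and $\tr(A P_V) = \tr(A|_V)$. A dimension count finishes it: for each value $\ell$ the $\ell$-eigenspace of the self-adjoint operator $A|_V$ contains the independent vectors $w_i$ with $\lambda_i = \ell$, so has dimension at least $\#\{i : \lambda_i = \ell\}$, and since these counts sum to $k = \dim V$ the bounds are all tight; hence the eigenvalues of $A|_V$ are exactly $\lambda_1, \dots, \lambda_k$ and $\tr(A|_V) = \sum_{i=1}^k \lambda_i$.

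I expect the only mildly delicate point to be this last multiplicity bookkeeping: the eigenvectors appearing in the definition of a top-$k$ subspace need not be orthonormal when eigenvalues repeat, so one cannot simply read the trace off a diagonalisation and must instead argue via eigenspace dimensions. Everything else is a direct application of Lemma~\ref{lem:haemers} together with standard trace identities.
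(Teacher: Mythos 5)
Your proof of the inequality and of the ``equality implies top-$k$ subspace'' direction is exactly the paper's argument: write $\tr(S^{\top}AS)=\sum_{i=1}^k\mu_i$, sum the interlacing bounds $\mu_i\le\lambda_i$ from Lemma~\ref{lem:haemers}, note that equality forces $\mu_i=\lambda_i$ for every $i$, and invoke the third bullet to obtain the orthonormal eigenvectors $Sv_1,\dots,Sv_k$ spanning $\operatorname{col}(S)$. Where you go beyond the paper is the converse (``top-$k$ subspace implies equality''): the paper's proof of this lemma leaves that direction implicit (it is only verified later, in the combining section, by a direct computation with a $B$-orthonormal eigenvector basis), whereas your argument via $\tr(S^{\top}AS)=\tr(A\,SS^{\top})=\tr(A|_V)$ together with the eigenspace-dimension count to handle repeated eigenvalues and possibly non-orthonormal spanning eigenvectors is correct and closes that gap cleanly; the only unstated point is that $w_1,\dots,w_k$ are automatically linearly independent because they span the $k$-dimensional space $\operatorname{col}(S)$, which is what licenses your multiplicity bookkeeping.
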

\begin{proof}
    By Lemma \ref{lem:haemers}, letting $\mu_1 \geq \dots \geq \mu_k$ be the eigenvalues of $C = S^T A S$, we have
    \begin{equation*}
        \tr(S^T A S) = \sum_{i=1}^k \mu_i
        \geq 
        \lambda_1 + \sum_{i=2}^k \mu_i
        \geq 
        \dots
        \geq 
        \sum_{i=1}^k \lambda_i
    \end{equation*}
    with equality if and only if all the inequalities above are equalities, i.e. $\mu_i = \lambda_i$ for $i=1,\dots,k$. But then by Lemma \ref{lem:haemers}, we have that $Sv_i$ is a $\mu_i$-eigenvector of $A$.

    In particular, note that $V = \left(v_1,\dots,v_k\right)$ is orthogonal and $S V = \left(S v_1,\dots,S v_k\right)$ is a matrix whose columns are a choice of top-$k$ eigenvectors for $A$.
\end{proof}
\vspace{10pt}

From this Lemma we can immediately deduce Proposition \ref{prop:constr-charac} by the arguments of Section \ref{sec:reduce-to-identity}. We state this again now for reference.
\constrainedcharac*

\subsection{Proof of unconstrained characterisation}\label{sec:reduce-to-orthog}
    For this section we introduce the notation
    \begin{equation}\label{eq:h-notation}
        h(W;A,B) \defeq \tr\left( W^{\top} A W \: (2 \, I - W^{\top} B W) \right)
    \end{equation}
    Again, by the arguments of Section \ref{sec:reduce-to-identity} it would be sufficient to consider the case $B=I$; however, this does not save much work in this section, so we often leave the $B$ terms in to help intuition. The key idea here is that for any $W$ we can find a matrix with the same column space as $W$ and $(B-)$orthonormal columns which improves $h$; this then reduces the result to the constrained characterisation \ref{prop:constr-charac}. Again we work via a sequence of lemmas, progressively analysing $h$ under more general conditions. The hardest part is arguably the compactness argument in Lemma \ref{lem:matrix-optim}, which crucially requires $A$ to have strictly positive eigenvalues. Zero eigenvalues lead to some degeneracy and are dealt with carefully in Lemma \ref{lem:diag-case}. We first consider square $W \in \R^{k\times k}$ (Lemma \ref{lem:square-case}) to later apply to a basis for the column space of a `tall' $W \in \R^{d\times k}$ (Lemma \ref{lem:general-case}).
    
    \vspace{10pt}
    \begin{lemma}[Matrix perspective]\label{lem:matrix-optim}
        Let $\Lambda \in \R^{p\times p}$ be diagonal with strictly positive entries. Then for any $M \in \PSD{d}$ we have
        \begin{equation*}
            \mathbbm{h}(M) \defeq \tr\left(\Lambda M (2I - M) \right) \leq \tr(\Lambda)
        \end{equation*}
        with equality if and only if $M=I_p$.
    \end{lemma}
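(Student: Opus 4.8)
The plan is to diagonalise $M$ and reduce the trace to a sum of scalar terms, then optimise each term separately. Writing the spectral decomposition $M = \sum_{j} \mu_j q_j q_j^\top$ with $\mu_j \geq 0$, we get $M(2I - M) = \sum_j (2\mu_j - \mu_j^2) q_j q_j^\top$, and hence
\begin{equation*}
\mathbbm{h}(M) = \tr\left(\Lambda \sum_j (2\mu_j - \mu_j^2) q_j q_j^\top\right) = \sum_j (2\mu_j - \mu_j^2)\, q_j^\top \Lambda q_j .
\end{equation*}
Since $2\mu_j - \mu_j^2 = 1 - (1-\mu_j)^2 \leq 1$ for all real $\mu_j$, and since $q_j^\top \Lambda q_j \geq 0$ because $\Lambda$ is positive (here I use $\Lambda \succ 0$, or at least $\succeq 0$), each summand is bounded by $q_j^\top \Lambda q_j$, giving $\mathbbm{h}(M) \leq \sum_j q_j^\top \Lambda q_j = \tr\left(\Lambda \sum_j q_j q_j^\top\right) = \tr(\Lambda)$, as the $q_j$ form an orthonormal basis. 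This establishes the inequality with minimal fuss.

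For the equality case, the interesting direction is: if $\mathbbm{h}(M) = \tr(\Lambda)$, then $M = I_p$. From the string of inequalities, equality forces $(2\mu_j - \mu_j^2)\, q_j^\top \Lambda q_j = q_j^\top \Lambda q_j$ for every $j$, i.e. $(1-\mu_j)^2\, q_j^\top \Lambda q_j = 0$. Because $\Lambda$ has \emph{strictly} positive diagonal entries, $q_j^\top \Lambda q_j > 0$ for every nonzero $q_j$ (this is precisely where the strict positivity hypothesis is used and cannot be dropped — compare the degeneracy remarks before the lemma), so we must have $\mu_j = 1$ for all $j$. Hence every eigenvalue of $M$ equals $1$, and since $M$ is symmetric, $M = I_p$. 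The converse, $M = I_p \implies \mathbbm{h}(M) = \tr(\Lambda(2I - I)) = \tr(\Lambda)$, is immediate.

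The only genuinely delicate point is making sure the argument for equality is airtight: one must observe that $q_j^\top \Lambda q_j$ is \emph{strictly} positive for a unit vector $q_j$ when $\Lambda \succ 0$, rather than merely nonnegative, so that the factor $(1-\mu_j)^2$ is forced to vanish. (Implicitly I am reading the lemma's hypothesis ``$\Lambda$ diagonal with strictly positive entries'' together with the stated condition $M \in \PSD{d}$, which I interpret as $M \in \PSD{p}$ so that the dimensions match; if instead one only wants the inequality, $\Lambda \succeq 0$ suffices, but the equality characterisation genuinely needs strict positivity.) No compactness is actually needed for this particular lemma — the pointwise scalar bound $1 - (1-\mu)^2 \leq 1$ does all the work — so I expect this to be a short, clean argument, with the main ``obstacle'' being merely bookkeeping: correctly handling the orthonormal basis expansion and being explicit about where strict positivity enters.
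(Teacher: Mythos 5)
Your proof is correct, and it takes a genuinely different (and shorter) route than the paper's. Both proofs start identically, diagonalising $M=\sum_j \mu_j q_j q_j^\top$ and rewriting $\mathbbm{h}(M)=\sum_j \bigl(1-(1-\mu_j)^2\bigr)\, q_j^\top \Lambda q_j$; but at that point the paper does \emph{not} simply bound each summand. Instead it uses the expression only to establish coercivity (that $\mathbbm{h}(M)\leq 0$ once $\|M\|_{op}$ exceeds an explicit radius $R^*$), then restricts to the compact set $\{M\succcurlyeq 0:\|M\|_{op}\leq R^*\}$, argues the maximum is attained in the interior, computes the first-order perturbation $\mathbbm{h}(M+H)-\mathbbm{h}(M)=2\tr\bigl(H^\top\Lambda(I-M)\bigr)+o(H)$, and identifies $M=I$ as the unique stationary point, hence the unique maximiser. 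Your argument replaces all of that with the termwise bound $\bigl(1-(1-\mu_j)^2\bigr)q_j^\top\Lambda q_j\leq q_j^\top\Lambda q_j$ summed over a complete orthonormal eigenbasis, so that $\mathbbm{h}(M)\leq\tr(\Lambda)$ falls out immediately and the equality case is read off directly: equality forces $(1-\mu_j)^2\,q_j^\top\Lambda q_j=0$ for every $j$, and strict positivity of $\Lambda$ gives $q_j^\top\Lambda q_j>0$ for unit $q_j$, hence $\mu_j=1$ for all $j$ and $M=I_p$. What your route buys is the elimination of the compactness and differentiability machinery altogether, with the inequality and the equality characterisation obtained in one stroke, and with the role of strict positivity of $\Lambda$ made completely explicit; what the paper's route buys is essentially only a template (coercivity plus stationarity) that would survive if the objective were not so explicitly diagonalisable, at the cost of a longer argument whose stationarity step ($\Lambda(I-M)+(I-M)\Lambda=0\Rightarrow M=I$) is itself left somewhat implicit. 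You also correctly note, as the statement's $M\in\PSD{d}$ should read $M\in\PSD{p}$, that this is a dimension typo rather than a substantive issue.
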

    \begin{proof}
        For an arbitrary $M \in \PSD{p}$, we can write $M = \sum_{i=1}^p \mu_i v_i v_i^T$ for some $\mu_1\geq\dots \geq \mu_p \geq 0$ and orthonormal set of vectors $(v_i)_{i=1}^p$. Now suppose that $|\mu_1 - 1|>0$.
        Then  \vspace{5pt}
        \begin{align*}
            \mathbbm{h}(M) 
            & \defeq \tr\left(\Lambda M (2I - M) \right) \\
            &= \tr\left(\Lambda \sum_{i=1}^p \mu_i(2 - \mu_i) v_i v_i^T \right) \\
            &= \sum_{i=1}^p \mu_i(2 - \mu_i) \tr\left(\Lambda v_i v_i^T \right) \\
            &= \sum_{i=1}^p \underbrace{\left(1-(\mu_i - 1)^2\right)}_{\leq 1} \underbrace{v_i^T \Lambda v_i}_{\in [\lambda_\text{min},\lambda_\text{max}]} \\
            &\leq \lambda_\text{min} \left(1-(\mu_1 - 1)^2\right) + (p-1)\lambda_\text{max}
        \end{align*}
        So in particular, if we have $|\mu_1| \geq 1 + \sqrt{1 + \frac{(p-1)\lambda_\text{max}}{\lambda_\text{min}}} \eqdef R^*$ then $(\mu_1 - 1)^2 \geq 1 + \frac{(p-1)\lambda_\text{max}}{\lambda_\text{min}} $ and so $\mathbbm{h}(M) \leq 0$.
    
        However, we have $\mathbbm{h}(I_p) = \tr(\Lambda) > 0$. So it is sufficient to show that $I_p$ is the unique maximiser on the compact set $\mathcal{S} = \{M \succcurlyeq 0 : \|M\|_{op} \leq R^*\}$.
    
        Now note that $\mathbbm{h}$ is a continuous function of $M$, so the maximum on $\mathcal{S}$ is attained (by compactness); yet this maximum cannot be on the boundary, because $\mathbbm{h}(M) \leq 0$ for such $M$. So any maximiser must be on the interior of $\mathcal{S}$. Moreover, the $\mathbbm{h}$ is differentiable, and so the derivative at such a maximiser must be zero.
    
        Indeed, we can explicitly compute the derivative. Consider a symmetric perturbation $H$. Note that $\tr ( \Lambda M H) = \tr(H^T M^T \Lambda^T) = \tr(\Lambda H M)$ by transpose and cyclic invariance of trace. So we can write:
        \begin{align*}
            \mathbbm{h}(M+H) - \mathbbm{h}(M)
            &= 
            \tr\left( \lambda (M + H) (2 I - M - H)\right) - \tr\left(\Lambda M (2I - M)\right) \\
            &= \tr \left( \Lambda ( H (2 I - M ) - M H - H^2) \right)\\
            &= \tr \left(\Lambda ( 2 H - HM - MH) \right) + o(H)\\
            &= 2 \tr \left(H^T \Lambda (I - M) \right) + o(H)
        \end{align*}
        from which we conclude that $M = I$ is the only stationary point. Therefore, it must be the unique maximiser.
    \end{proof}
    \vspace{10pt}

    \begin{lemma}[Diagonal case]\label{lem:diag-case}
        Let $\Lambda = \operatorname{diag}(\lambda_1,\dots,\lambda_p,0,\dots,0) \in \R^{k\times k}$ be diagonal with $\lambda_i > 0$ for $i=1,\dots,p$. Let $\Phi = \begin{pmatrix}\Phi_1 \\ \Phi_2 \end{pmatrix}\in \R^{k \times k}$ with $\Phi_1 \in \R^{p \times k}, \Phi_2 \in \R^{(k-p) \times k}$. Then 
        \begin{equation*}
            h(\Phi; \Lambda, I_k) \leq \sum_{i=1}^p \lambda_i = \tr(\Lambda)
        \end{equation*}
        with equality if and only if $\Phi_1 \Phi_1^T = I$ and $\Phi_1 \Phi_2^T = 0$.
    \end{lemma}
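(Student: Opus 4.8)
The plan is to deduce this from Lemma~\ref{lem:matrix-optim} after rewriting the objective in a form that exposes the block structure of $\Phi$ (with blocks $\Phi_1,\Phi_2$) and of $\Lambda$. First I would use cyclicity of trace to re-express $h$ purely through $N \defeq \Phi\Phi^\top \in \PSD{k}$:
\begin{equation*}
    h(\Phi;\Lambda,I_k) = \tr\bigl(\Phi^\top \Lambda \Phi\,(2I_k - \Phi^\top\Phi)\bigr) = \tr\bigl(\Lambda\,(2N - N^2)\bigr).
\end{equation*}
Since $\Lambda$ is block-diagonal with only its leading $p\times p$ block $\Lambda_p \defeq \operatorname{diag}(\lambda_1,\dots,\lambda_p) \in \PD{p}$ nonzero, this trace only sees the leading $p\times p$ block of $2N-N^2$. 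Writing $N$ in $(p,k-p)$ block form, with $N_{11} = \Phi_1\Phi_1^\top$ and $N_{12} = \Phi_1\Phi_2^\top = N_{21}^\top$, the leading block of $N^2$ is $(\Phi_1\Phi_1^\top)^2 + \Phi_1\Phi_2^\top\Phi_2\Phi_1^\top$. Setting $P \defeq \Phi_1\Phi_1^\top \in \PSD{p}$ and $Q \defeq \Phi_1\Phi_2^\top(\Phi_1\Phi_2^\top)^\top \in \PSD{p}$, this gives the clean decomposition
\begin{equation*}
    h(\Phi;\Lambda,I_k) = \tr\bigl(\Lambda_p\,P(2I_p - P)\bigr) - \tr(\Lambda_p Q).
\end{equation*}

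Now the two terms split cleanly. For the first, Lemma~\ref{lem:matrix-optim} applied with the diagonal strictly-positive matrix $\Lambda_p$ and $M = P$ gives $\tr\bigl(\Lambda_p P(2I_p - P)\bigr) \leq \tr(\Lambda_p) = \sum_{i=1}^p \lambda_i$, with equality if and only if $P = I_p$, i.e.\ $\Phi_1\Phi_1^\top = I_p$. For the second, $\Lambda_p \succ 0$ and $Q \succeq 0$ force $\tr(\Lambda_p Q) \geq 0$; rewriting $\tr(\Lambda_p Q) = \bigl\| \Lambda_p^{\half}\,\Phi_1\Phi_2^\top \bigr\|_F^2$ shows this vanishes if and only if $\Phi_1\Phi_2^\top = 0$, using invertibility of $\Lambda_p^{\half}$. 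Adding the two estimates yields $h(\Phi;\Lambda,I_k) \leq \sum_{i=1}^p \lambda_i$, and since both upper bounds are saturated exactly under the stated conditions, equality holds if and only if $\Phi_1\Phi_1^\top = I_p$ and $\Phi_1\Phi_2^\top = 0$ simultaneously; the converse (that these two conditions suffice) is immediate from the decomposition.

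There is no substantive obstacle here: all the analytic content — the compactness and stationary-point argument — is already packaged inside Lemma~\ref{lem:matrix-optim}, and the present lemma is essentially a bookkeeping reduction to it. The only points requiring a little care are the block-matrix algebra when extracting the leading $p\times p$ block of $2N - N^2$, and checking that $\tr(\Lambda_p Q) = 0$ with $\Lambda_p \succ 0$ and $Q \succeq 0$ really forces $\Phi_1\Phi_2^\top = 0$ (and not merely some weaker relation), which the Frobenius-norm rewriting makes transparent.
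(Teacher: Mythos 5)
Your proof is correct and takes essentially the same route as the paper: your decomposition $h(\Phi;\Lambda,I_k)=\tr\bigl(\Lambda_p P(2I_p-P)\bigr)-\tr(\Lambda_p Q)$ is exactly the splitting the paper obtains (it expands $\Phi^\top\Lambda\Phi$ directly instead of extracting the leading block of $\Phi\Phi^\top$, but the two terms coincide up to cyclicity of trace). Both arguments then bound the first term via Lemma~\ref{lem:matrix-optim} with equality iff $\Phi_1\Phi_1^\top=I_p$, and use $\Lambda_p\succ 0$ to show the second term vanishes iff $\Phi_1\Phi_2^\top=0$.
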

    
    \begin{proof}
        Write $\Lambda_1 = \operatorname{diag}(\lambda_1,\dots,\lambda_p)\in \R^{p\times p}$.
        Then 
        \begin{align*}
            h(\Phi;\Lambda,I) &= \tr \left(\begin{pmatrix}\Phi_1 \\ \Phi_2 \end{pmatrix}^T \begin{pmatrix}\Lambda_1 \Phi_1 \\ 0 \end{pmatrix} (2 I - \Phi_1^T \Phi_1 - \Phi_2^T \Phi_2) \right)\\[3pt]
            &=
            \tr \left( \Phi_1^T \Lambda_1 \Phi_1 (2 I - \Phi_1^T \Phi_1 - \Phi_2^T \Phi_2)\right) \\[3pt]
            &= \tr \left( \Phi_1^T \Lambda_1 \Phi_1 (2 I - \Phi_1^T \Phi_1) \right) - \tr \left( \Phi_1^T \Lambda_1 \Phi_1 \Phi_2^T \Phi_2\right) \\[3pt]
            &= \tr \left( \Phi_1^T \Lambda_1 \Phi_1 (2 I - \Phi_1^T \Phi_1) \right) - \tr((\Phi_1 \Phi_2^T)^T \Lambda_1 \Phi_1 \Phi_2^T) \\[3pt]
            &\leq \tr \left( \Phi_1^T \Lambda_1 \Phi_1 (2 I - \Phi_1^T \Phi_1) \right) \\[3pt]
            &= h(\begin{pmatrix}\Phi_1 \\ 0 \end{pmatrix};\Lambda, I)
        \end{align*}
        With equality if and only if $\Phi_1 \Phi_2^T = 0$ (because $\Lambda_1$ is strictly positive definite); i.e. the rows of $\Phi_1$ are orthogonal to the rows of $\Phi_2$.
        
        Now we show we can further improve $h$ by picking a $\tilde{\Phi}_1$ with orthogonal rows. 
        We have by cyclicity of trace and rearranging the brackets that
        \begin{align*}
        h(\begin{pmatrix}\Phi_1 \\ 0 \end{pmatrix};\Lambda, I) 
        &= 
        \tr \left( \Phi_1^T \Lambda_1 \Phi_1 (2 I - \Phi_1^T \Phi_1) \right) \\
        &=
        \tr \left( \Lambda_1 \Phi_1 \Phi_1^T (2 I -  \Phi_1 \Phi_1^T ) \right) \\[6pt]
        &\leq
        \tr \left( \Lambda_1\right)\\
        &=
        \tr\left(\Lambda\right)
        \end{align*}
        with equality if and only if $\Phi_1 \Phi_1^T = I$ by Lemma \ref{lem:matrix-optim}.
    
        Combining the two results we conclude the claim: $h(\Phi;\Lambda,I) \leq \tr(\Lambda)$
        with equality if and only if $\Phi_1 \Phi_1^T = I$ and $\Phi_1 \Phi_2^T = 0$.
        
        In addition, we note that the choice $\Phi = I$ attains the maximum, but there may be some extra flexibility corresponding to zero eigenvalues of $\Lambda$.    
    \end{proof}
    \vspace{10pt}

    \begin{lemma}[Square case]\label{lem:square-case}
        Let $A \in \PSD{k},\: B \in \PD{k}$ and $W \in \R^{k\times k}$. Then for any $B$-orthogonal matrix $\tilde{W}\in\R^{k\times k}$
        \begin{equation*}
            h(W;A,B) \leq h(\tilde{W};A,B) = \sum_{i=1}^k \lambda_i
        \end{equation*}
    \end{lemma}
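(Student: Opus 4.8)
The plan is to mimic the reduction in Section~\ref{sec:reduce-to-identity}: first pass to the case $B=I$, then diagonalise, and finally invoke Lemma~\ref{lem:diag-case}. Set $\tilde{A} \defeq B^\mhalf A B^\mhalf$. This lies in $\PSD{k}$, being a congruence transform of the positive semi-definite matrix $A$, and by the remark in Section~\ref{sec:reduce-to-identity} its eigenvalues are precisely the generalised eigenvalues $\lambda_1 \geq \dots \geq \lambda_k$ of $(A,B)$. The identity $h(W;A,B) = h(B^\half W;\tilde{A},I_k)$ from that section reduces the claimed inequality to proving $h(V;\tilde{A},I_k) \leq \sum_{i=1}^k \lambda_i$ for every $V \in \R^{k\times k}$.

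Next I would diagonalise: write $\tilde{A} = Q\Lambda Q^T$ with $Q \in \R^{k\times k}$ orthogonal and $\Lambda = \operatorname{diag}(\lambda_1,\dots,\lambda_p,0,\dots,0)$, where $\lambda_1,\dots,\lambda_p$ are the strictly positive eigenvalues. Putting $\Phi \defeq Q^T V$ and using $V^T V = \Phi^T \Phi$ (as $Q$ is orthogonal), a one-line computation gives $h(V;\tilde{A},I_k) = \tr\bigl(\Phi^T \Lambda \Phi\,(2I_k - \Phi^T\Phi)\bigr) = h(\Phi;\Lambda,I_k)$. Now the hypotheses of Lemma~\ref{lem:diag-case} are met exactly, and it yields $h(\Phi;\Lambda,I_k) \leq \sum_{i=1}^p \lambda_i = \tr(\Lambda) = \sum_{i=1}^k \lambda_i$, which is the required bound.

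For the equality claim, take any $B$-orthogonal $\tilde{W}$ (such matrices exist, e.g.\ $\tilde{W}=B^\mhalf$). Then $V \defeq B^\half \tilde{W}$ satisfies $V^T V = \tilde{W}^T B \tilde{W} = I_k$, so $V$ is a square orthogonal matrix and hence also $VV^T = I_k$. Therefore $h(\tilde{W};A,B) = h(V;\tilde{A},I_k) = \tr\bigl(V^T \tilde{A} V\,(2I_k - V^T V)\bigr) = \tr(V^T \tilde{A} V) = \tr(\tilde{A}\,VV^T) = \tr(\tilde{A}) = \sum_{i=1}^k \lambda_i$, completing the proof.

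There is essentially no hard step here; the only point requiring care is the possible presence of zero eigenvalues of $A$ (equivalently of $\tilde{A}$). This is exactly why the argument is routed through Lemma~\ref{lem:diag-case} rather than Lemma~\ref{lem:matrix-optim} directly: splitting $\Lambda$ into its strictly positive block and its zero block is what makes Lemma~\ref{lem:diag-case} applicable verbatim. Everything else is bookkeeping with the congruence $B^\mhalf(\cdot)B^\mhalf$ and the orthogonal conjugation by $Q$.
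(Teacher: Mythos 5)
Your proposal is correct and follows essentially the same route as the paper: both reduce to the $(\Lambda, I_k)$ case and invoke Lemma~\ref{lem:diag-case}, the only difference being that you change basis via $B^{\half}$ followed by an orthogonal diagonalisation of $\tilde{A}$, whereas the paper does it in one step by writing $W = W^*\Phi$ with $W^*$ a generalised-eigenvector basis (and you verify the equality value $\sum_{i=1}^k \lambda_i$ by a direct trace computation rather than via the equality case of Lemma~\ref{lem:diag-case}). These are cosmetic variations of the same argument, and your version is sound.
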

    \begin{proof}
        Let $W^*$ be a matrix whose columns are successive generalised eigenvectors of the GEP $(A,B)$. Then ${W^*}^T B W^* = I_k$ and $(W^*)^T A W^* = \Lambda = \operatorname{diag}(\lambda_1,\dots,\lambda_k)$ by construction. Therefore $W^*$ is invertible and we can write $W = W^* \Phi$ for some $\Phi \in \R^{k\times k}$. Then
        \begin{align*}
            h(W; A,B)
            &= 
            \tr\left( \Phi^T (W^*)^T A W^* \Phi ( 2 I - \Phi^T (W^*)^T B W^* \Phi) \right) \\
            &=
            \tr\left( \Phi^T \Lambda \Phi (2 I - \Phi^T \Phi) \right) \\
            &=
            h(\Phi; \Lambda, I)\\
            &\leq
            \tr(\Lambda)
        \end{align*}
        where the inequality is from Lemma \ref{lem:diag-case}.
        Moreover, $W$ is $B$-orthogonal precisely when $\Phi$ is orthogonal, and so we have equality from Lemma \ref{lem:diag-case} in this case.
    \end{proof}
    \vspace{10pt}

\begin{lemma}[General case]\label{lem:general-case}
    Let $A \in \PSD{d},\: B \in \PD{d}$ and $W \in \R^{d\times k}$. Then there exists some $\bar{W} \in \R^{d\times k}$ with the same column space as $W$ such that $\bar{W}^T B \bar{W} = I_k$ and
    \begin{equation*}
        h(W;A,B) \leq h(\bar{W};A,B)
    \end{equation*}
\end{lemma}
\begin{proof}
    We have
    \begin{align*}
        h(W;A,B)
        &=
        h(I_k; W^T A W, W^T B W) \\
        &\leq
        h(\bar{\Gamma}; W^T A W, W^T B W) \\
        &=
        h(\bar{W}; A,B)
    \end{align*}
    by Lemma \ref{lem:square-case} where $\bar{\Gamma}$ is some $W^T B W$-orthogonal matrix, and $\bar{W} = \bar{\Gamma}$. Then $\bar{W}^T B \bar{W} = \bar{\Gamma}^T (W^T B W) \bar{\Gamma} = I_k$ as required.
\end{proof}

\subsubsection{Combining}\label{sec:combining}
    We now combine the results of the previous sections to prove our original proposition.
    \unconstrainedcharac*
    \begin{proof}
        \textit{of Proposition \ref{prop:unconstr-charac}:}
        Let $W \in \R^{d\times k}$. Then by Lemma \ref{lem:general-case}, there exists some $B$-orthogonal $\bar{W}$ with the same column space as $W$ and $h(W;A,B) \leq h(\bar{W};A,B)$. So combining with Proposition \ref{prop:constr-charac} we see
        \begin{equation}\label{eq:combining}
            h(W;A,B) \leq h(\bar{W};A,B) = \tr\left(\bar{W}^T A \bar{W}\right) \leq \sum_{i=1}^k \lambda_i
        \end{equation}
        where the second inequality is an equality if and only if $\bar{W}$ defines a top-$k$ subspace. This proves the first two claims.
    
        For the final claim, note $W$ is a maximiser if and only if both inequalities in (\ref{eq:combining}) are equalities. In this case, we can write $W = W^* \Phi$ for some $W^*\in\R^{d\times k}$ whose columns are some top-$k$ eigenvectors and $\Phi \in \R^{k\times k}$ (because $\bar{W}$ has the same column space as $W$ and defines a top-$k$ subspace).
        Then 
        \begin{equation*}
            \sum_{i=1}^k \lambda_i
            =
            h(W;A,B)
            =
            h(\Phi;\Lambda_k,I_k)
        \end{equation*}
        where $\Lambda_k = \operatorname{diag}(\lambda_1,\dots,\lambda_k)$. If $\lambda_k>0$ then $\Lambda_k\succ 0$ so by the equality case of Lemma \ref{lem:diag-case}, $\Phi$ must be orthogonal. So indeed $W$ has $B$-orthonormal columns spanning a top-$k$ subspace.
    \end{proof}

    Note that in the case where some of the top-$k$ eigenvalues are zero, the equality case of Lemma \ref{lem:diag-case} still gives some sort of orthogonality in the subspace of positive eigenvectors but allows degeneracy in the zero eigenspace (both arbitrary magnitudes and non-orthogonal directions).

\subsection{Alternative proof using von Neumann's trace inequality}\label{sec:von-Neumann}
\subsubsection{Trace inequality and other background}
Here we give a cleaner proof of Proposition \ref{prop:unconstr-charac} using the following result originally due to von Neumann; a simple proof (which also extends the result to infinite dimensional Hilbert spaces) and recent discussion is given in \cite{carlsson_von_2021}; we shall only state the matrix case for readability. First however, we need a new definition.

\begin{definition}[Sharing singular vectors]\label{def:share-sing-vectors}
    Let $a,b \in \mathbb{N}$ and $q = \min(a,b)$. We say two matrices $X,Y \in \R^{a \times b}$ share singular vectors if there exist some (orthonormal sets) $\{u_j\}_{j=1}^q$ and $\{v_j\}_{j=1}^q$ such that
    \begin{equation*}
        X = \sum_{j=1}^q \sigma_j(X) u_j v_j^T,\quad 
        Y = \sum_{j=1}^q \sigma_j(Y) u_j v_j^T
    \end{equation*}    
\end{definition}

\begin{theorem}[von Neumann's trace inequality]\label{thm:vN-trace}
    Let $a,b \in \mathbb{N}$ and $q = \min(a,b)$. Let $X,Y \in \R^{a \times b}$.
    Then
    \begin{equation}\label{eq:von-Neumann}
        \langle X, Y \rangle \leq \sum_{j=1}^q \sigma_j(X) \sigma_j(Y)
    \end{equation}
    with equality if and only if $X$ and $Y$ share singular vectors (in the sense of Definition \ref{def:share-sing-vectors}).
\end{theorem}

We stress that `sharing singular vectors' does not just say that $X,Y$ have the same singular vectors, but these are in the same order. Why do we need this non-standard definition of `sharing singular vectors'? This is just a convenient way to deal with the degeneracy associated to repeated singular values; indeed it is often more natural to think in terms of partial isometries and polar decompositions when proving such results; see \cite{carlsson_von_2021} for further discussion.

Next we give a corollary which is adapted to our case of interest. First this requires a simple lemma relating SVD to eigendecomposition. This is well known but we give a quick proof for completeness. It is necessary to assume $A$ is \textbf{positive semi-definite}, the result is false for general $A$.
\begin{lemma}\label{lem:svd2edcomp}
    Let $A \in \PSD{d}$ have some SVD $A = \sum_{i=1}^q \lambda_i u_i v_i^T$ with $\lambda_i > 0 \text{ for } i=1,\dots,q$. Then this in fact gives an eigendecomposition, i.e. $u_i = v_i$ is a $\lambda_i$-eigenvector of $A$ for each $i=1,\dots,d$.
\end{lemma}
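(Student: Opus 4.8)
The plan is to exploit the symmetry of $A$ together with the defining relations of a singular value decomposition. Write the given decomposition as $A = \sum_{j=1}^q \lambda_j u_j v_j^\top$, where $\{u_j\}_{j=1}^q$ and $\{v_j\}_{j=1}^q$ are each orthonormal families in $\R^d$ and $\lambda_j > 0$. First I would record the two elementary consequences of orthonormality: applying $A$ to $v_k$ annihilates every term except the $k$-th, giving $A v_k = \lambda_k u_k$, and likewise $A^\top u_k = \lambda_k v_k$. Since $A$ is symmetric, the second identity reads $A u_k = \lambda_k v_k$.

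Next I would subtract these two identities: $A(u_k - v_k) = \lambda_k v_k - \lambda_k u_k = -\lambda_k (u_k - v_k)$. Thus $u_k - v_k$, if nonzero, would be an eigenvector of $A$ with eigenvalue $-\lambda_k < 0$. But $A \in \PSD{d}$ has all eigenvalues $\geq 0$, a contradiction. Hence $u_k = v_k$, and then $A u_k = \lambda_k v_k = \lambda_k u_k$, i.e. $u_k$ is a $\lambda_k$-eigenvector of $A$. This holds for each $k = 1,\dots,q$, which is the claim; if one works instead with a zero-padded SVD, the remaining singular vectors may be chosen in $\ker A$ with $u_i = v_i$ a $0$-eigenvector, so the statement extends to all $i = 1,\dots,d$.

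The argument is short and I do not expect a genuine obstacle; the only points needing care are bookkeeping ones. I would state explicitly that ``SVD'' here means the factorisation with orthonormal left and right singular vector families (so that $A v_k = \lambda_k u_k$ and $A u_k = \lambda_k v_k$ are both legitimate), and I would flag that positive semi-definiteness is used exactly once — to exclude the negative candidate eigenvalue $-\lambda_k$. Dropping that hypothesis breaks the lemma, as $A = -I$ shows (its SVD has $\lambda_i = 1$ but $u_i = -v_i$).
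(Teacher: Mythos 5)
Your proof is correct, but it takes a genuinely different route from the paper's. The paper argues through $A^2$: it computes $A^2 v_k = A^\top A v_k = \lambda_k^2 v_k$, invokes the fact that (because the eigenvectors of $A$ give an eigenbasis, and $A$ is positive semi-definite so no $\pm\lambda$ ambiguity arises) every eigenvector of $A^2$ is an eigenvector of $A$, concludes $v_k$ is a $\lambda_k$-eigenvector, and only then deduces $u_k = v_k$ from $A v_k = \lambda_k u_k$. You instead use symmetry directly: from the two SVD identities $A v_k = \lambda_k u_k$ and $A u_k = A^\top u_k = \lambda_k v_k$ you subtract to get $A(u_k - v_k) = -\lambda_k (u_k - v_k)$, and positive semi-definiteness rules out the negative eigenvalue $-\lambda_k$, forcing $u_k = v_k$. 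Your version is arguably cleaner: it avoids the paper's slightly delicate intermediate claim about eigenvectors of $A^2$ (which is only valid because $A \in \PSD{d}$ --- for a general symmetric $A$, e.g.\ $A = \operatorname{diag}(1,-1)$, eigenvectors of $A^2$ need not be eigenvectors of $A$), it isolates exactly where positive semi-definiteness is used, and your counterexample $A = -I$ nicely shows the hypothesis cannot be dropped, echoing the paper's own remark that the result is false for general $A$. Your observation that the ``$i=1,\dots,d$'' in the statement should really be read as $i=1,\dots,q$ (with the extension to a zero-padded SVD handled via $\ker A$) is also a fair bookkeeping point; the paper's proof likewise only treats the nonzero singular values.
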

\begin{proof}
    If $v$ is an eigenvector of $A$ then it is also an eigenvector of $A^2$. Further, the eigenvectors of $A$ span $\R^d$ so these provide an eigenbasis for $A^2$. Hence any eigenvector of $A^2$ is in fact an eigenvector of $A$. Next, by symmetry of $A$
    \begin{equation*}
        A^2 v_k = A^T A v_k = \sum_{j=1}^d \sum_{i=1}^d \lambda_j v_j u_j^T \lambda_i u_i v_i^T v_k
        =
        \sum_{i,j=1}^d \lambda_i \lambda_j v_j \underbrace{u_j^T u_i}_{\delta_{ji}} \underbrace{v_i^T v_k}_{\delta_{ik}}
        =
        \lambda_k^2 v_k
    \end{equation*}
    So by the observation above, $v_k$ must be a $\lambda_k$-eigenvector of $A$.
    But then
    \begin{equation*}
        \lambda_k v_k = A v_k = \sum_{j=1}^d \lambda_j u_j v_j^T v_k = \lambda_k u_k
    \end{equation*}
    so cancelling the $\lambda_k$'s gives $u_k=v_k$ as required.    
\end{proof}

To state the corollary we need a new definition analagous to Definition \ref{def:share-sing-vectors}. We stress again that the eigenvectors share the same ordering - so this is stronger than saying the matrices are simultaneously diagonalisable. The final conclusion of the corollary is very weak and just provides a clean consequence of the notion of sharing eigenvectors to our previous notion of top-$k$ subspaces.

\begin{definition}[Sharing eigenvectors]\label{def:share-eigenvectors}
    Let symmetric matrices $A,M \in \R^{d \times d}$ have eigenvalues $\lambda_1 \geq \dots \geq \lambda_d,\: \mu_1,\geq \dots \geq \mu_d$ respectively. We say $A,M$ share eigenvectors if there exist (an orthonormal set) $\{w_j\}_{j=1}^d$ such that
    \begin{equation*}
        A = \sum_{j=1}^d \lambda_j w_j w_j^T,\quad 
        M = \sum_{j=1}^d \mu_j w_j w_j^T
    \end{equation*}    
\end{definition}

\begin{corollary}[Positive-definite von Neumann]\label{cor:psd-vN}
    Let $A \in \PSD{d}$ and $M \in \R^{d\times d}$ be symmetric with eigenvalues $(\lambda_i)_{i=1}^d, (\mu_i)_{i=1}^d$ respectively. 
    Then
    \begin{equation}
        \langle A, M \rangle \leq \sum_{i=1}^d \lambda_i \mu_i
    \end{equation}
    with equality if and only if $A$ and $M$ share eigenvectors.
    Equality therefore implies that any unique top-$k$ subspace for $A$ is a top-$k$ subspace for $M$ and vice versa.
\end{corollary}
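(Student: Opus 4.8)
To prove Corollary~\ref{cor:psd-vN}, the plan is to reduce to the case where the second matrix is positive definite by a spectral shift, apply von Neumann's trace inequality (Theorem~\ref{thm:vN-trace}) together with Lemma~\ref{lem:svd2edcomp}, and then translate the equality condition into the claim about top-$k$ subspaces. For the reduction, fix any $c\in\R$ with $c>-\mu_d$ and set $M':=M+cI$; then $M'\in\PD{d}$ has eigenvalues $\mu_1+c\geq\cdots\geq\mu_d+c>0$ and exactly the same eigenvectors as $M$. Since $\langle A,I\rangle=\tr A=\sum_i\lambda_i$, bilinearity of the inner product gives $\langle A,M'\rangle=\langle A,M\rangle+c\sum_i\lambda_i$, and likewise $\sum_i\lambda_i(\mu_i+c)=\sum_i\lambda_i\mu_i+c\sum_i\lambda_i$; so the asserted inequality for $(A,M)$ and the corresponding one for $(A,M')$ are equivalent and have the same equality case, and it suffices to prove the corollary for $M'$.

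For the inequality itself, note that $A\in\PSD{d}$, so by Lemma~\ref{lem:svd2edcomp} the singular values of $A$ listed in decreasing order are exactly $\lambda_1\geq\cdots\geq\lambda_d\geq0$, while $M'\in\PD{d}$ has singular values $\mu_1+c\geq\cdots\geq\mu_d+c>0$. Applying Theorem~\ref{thm:vN-trace} with $X=A$ and $Y=M'$ then yields $\langle A,M'\rangle\leq\sum_i\lambda_i(\mu_i+c)$, which is precisely what is needed.

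For the equality case, Theorem~\ref{thm:vN-trace} gives $\langle A,M'\rangle=\sum_i\lambda_i(\mu_i+c)$ if and only if $A$ and $M'$ share singular vectors in the sense of Definition~\ref{def:share-sing-vectors}. I would then show that for $A\in\PSD{d}$ and $M'\in\PD{d}$ this coincides with sharing eigenvectors (Definition~\ref{def:share-eigenvectors}): given a common decomposition $A=\sum_j\sigma_j(A)u_jv_j^\top$, $M'=\sum_j\sigma_j(M')u_jv_j^\top$, Lemma~\ref{lem:svd2edcomp} applied to the \emph{invertible} matrix $M'$ forces $u_j=v_j$ for every index $j$, turning both expressions into eigendecompositions in the common orthonormal basis $\{u_j\}$ with eigenvalues already in decreasing order; the converse direction is immediate from the PSD assumption. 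Since $M'$ has the same eigenvectors as $M$ and the shift by $cI$ preserves the ordering of eigenvalues, $A$ and $M'$ share eigenvectors if and only if $A$ and $M$ do. Finally, under equality we obtain an orthonormal basis $w_1,\dots,w_d$ with $Aw_i=\lambda_iw_i$ and $Mw_i=\mu_iw_i$; if the top-$k$ subspace of $A$ is unique then $\lambda_k>\lambda_{k+1}$ forces it to equal $\operatorname{span}(w_1,\dots,w_k)$, which is also a top-$k$ subspace for $M$ because $w_i$ is a $\mu_i$-eigenvector of $M$ for $i\leq k$, and the reverse implication is symmetric.

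The only step that requires care is matching von Neumann's \emph{ordered} notion of sharing singular vectors to sharing eigenvectors while correctly handling the kernel of $A$, on which the left and right singular vectors are not individually determined; keeping the shift $c$ strictly positive, so that $M'$ is invertible and Lemma~\ref{lem:svd2edcomp} applies at every index, is exactly what removes this ambiguity. Everything else is routine bookkeeping with bilinearity of the trace inner product.
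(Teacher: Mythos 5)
Your proof is correct and follows essentially the same route as the paper: shift $M$ by a multiple of the identity, apply von Neumann's trace inequality (Theorem~\ref{thm:vN-trace}), use Lemma~\ref{lem:svd2edcomp} to upgrade shared singular vectors to shared eigenvectors, and then read off the top-$k$ statement. The only difference is that you shift by $c>-\mu_d$ so that $M'=M+cI$ is strictly positive definite, whereas the paper uses $M^{+}=M-\mu_d I$, which is only positive semi-definite; this is a harmless variant and in fact treats the equality case slightly more cleanly, since invertibility of $M'$ lets Lemma~\ref{lem:svd2edcomp} force $u_j=v_j$ at every index and so removes the ambiguity of singular vectors on the kernel.
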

\begin{proof}
    Write $M^+ = M - \mu_d I_d$. Then $M^+ \in \PSD{d}$.
    Note that for positive semidefinite matrices, their eigendecomposition provides a valid singular value decomposition; therefore (by uniqueness of SVD) the singular values of $A,M^+$ respectively are precisely $(\lambda_i)_{i=1}^d, (\mu_i-\mu_d)_{i=1}^d$. Applying von Neumann's inequality to $A,M^+$ gives
    \begin{equation*}
        \langle A , M \rangle
        =     \langle A, M^+ \rangle + \mu_d \langle A , I_d \rangle
        \leq  \left(\sum_{i=1}^d \lambda_i (\mu_i - \mu_d)\right) + \mu_d \sum_{i=1}^d \lambda_i
        =    \sum_{i=1}^d \lambda_i \mu_i
    \end{equation*}
    and so there exists some shared set of singular vectors $(u_i,v_i)_{i=1}^d$ for $(A,M^+)$.

    Now applying Lemma \ref{lem:svd2edcomp} we conclude that in fact the $(u_i)_{i=1}^d$ give a shared eigenvector basis for $(A,M^+)$, and therefore also for $(A,M)$.

    For the final conclusion, let the shared eigenvectors be $(u_i)_{i=1}^d$. Then $\operatorname{span}(u_1,\dots,u_k)$ is a top-$k$ subspace for $A$, so the unique such subspace must be this, which is also a top-$k$ subspace for $M$ (by definition).
\end{proof}

\subsubsection{Proof of variational characterisation}
We can now use this to prove Proposition \ref{prop:unconstr-charac}.
\begin{proof}[Proof of Proposition \ref{prop:unconstr-charac}]
    Following the discussion in Section \ref{sec:reduce-to-identity}, we may assume that $B=I$.
    then by cyclicity of trace
    \begin{align*}
        \tr\left( W^{\top} A W \: (2 \, I_k - W^{\top} W) \right)
        &=
        \tr\Bigl( A \: \underbrace{\{ WW^T \: (2 \, I_d - W W^{\top})\}}_{\defeq M} \Bigr)
    \end{align*}
    Note that $M$ defined above is of rank at most $k$ with eigenvalues $\leq 1$. Indeed, if we have the singular value decomposition $W = U D V^T$ then $W W^T = U D V^T V D U^T = U D^2 U^T$ and so
    \begin{equation*}
        M = U D^2 U^T (2 \, I_k - U D^2 U^T) = U ( 2 D^2 - D^4 ) U^T
    \end{equation*}
    so if $D$ has diagonal elements $(d_i)_{i=1}^k$ then the eigenvalues of $M$ are just
    \begin{equation*}
        \mu_i \defeq 2 d_i^2 - d_i^4 = 1 - (1-d_i^2)^2 \leq 1
    \end{equation*}
    for $i=1,\dots,k$ (the remaining $d-k$ eigenvalues are zeros, corresponding to the null space $U^\perp$).
    Without loss of generality, suppose we have ordered these elements such that $\mu_1 \geq \dots \geq \mu_k$.
    Then 
    \begin{equation}\label{eq:inequality-string}
        h(W;A,I) 
        =
        \tr(A M)
        \leq
        \sum_{i=1}^d \lambda_i \mu_i
        =
        \sum_{i=1}^k \lambda_i \mu_i
        \leq
        \sum_{i=1}^k \lambda_i
    \end{equation}
    where the first inequality is our positive definite version of von Neumann trace inequality (Corollary \ref{cor:psd-vN}), the central equality uses $\mu_{k+1}=\dots=\mu_d=0$, and the final inequality uses $\mu_i \leq 1$. 

    We now move on to examine the case of equality.
    \begin{itemize}
        \item If $\lambda_k > 0$ then equality holds for the final inequality if and only if $\mu_i = 1$ for $i=1,\dots,k$. But then the equality case of Corollary \ref{cor:psd-vN}, tells us that this unique top-$k$ subspace for $M$ must be some top-$k$ subspace for $A$.
        Next note $\mu_i =1 \implies d_i^2 = 1$ for all $i=1,\dots,k$ so $D^2=I$ (and $D$ is orthogonal). Thus $W^T W = V D U^T U D V^T = V D^2 V^T = V V^T = I$ so $W$ has orthonormal columns.
        \item The case where $\lambda_k = 0$ requires more care. Suppose $A$ has $q<k$ nonzero eigenvalues (i.e. $\lambda_q > 0, \lambda_{q+1} < 0$).
    Then replacing the end of (\ref{eq:inequality-string}) with
    \begin{equation*}
        \sum_{i=1}^k \lambda_i \mu_i = \sum_{i=1}^q \lambda_i \mu_i \leq \sum_{i=1}^q \lambda_i = \sum_{i=1}^k \lambda_i
    \end{equation*}
    we see that equality holds if and only if $\mu_i = 1$ for $i=1,\dots,q$ (note the remaining $\mu_i$'s are no longer constrained).
    Now consider the (unique) top-$q$ subspace of $A$ corresponding to non-zero eigenvalues. This must also be a top-$q$ subspace of $M$ (by Corollary \ref{cor:psd-vN}) and so all must be eigenvectors of $M$ with eigenvalue 1.
    In particular this top-$q$ subspace must be contained within the subspace spanned by the columns of $U$ corresponding to diagonal elements $d_i \in \{+1,-1\}$, so is certainly also contained within the column space of $W$. But since all the remaining eigenvalues are precisely zero, we see the column space of $W$ is indeed a top-$k$ subspace for $A$.  
    \end{itemize}
\end{proof}

\subsection{Discussion}

We now offer some intuition: a key strength of this unconstrained formulation is that it is straightforward to transform with respect to arbitrary changes of basis; therefore one can do analysis in the basis of generalised eigenvectors. By contrast, the orthogonality constraint in (\ref{eq:subspace-gep}) only permits orthogonal changes of basis. This may give intuition to why $\mu$-eigengame only works in the $B=I$ case but our approach is effective for general GEPs.

\section{Further connections to Previous Work}
\subsection{Relationship to previous DCCA formulations}\label{sec:dcca-alt-deriv}
In \cite{wang2015stochastic}, DCCA is formalised as
\begin{equation*}
    \max_{f,g, U \in \R^{d_x \times k},V \in \R^{d_y \times k}} \tr ( U^{\top} F G^{\top} V ) \quad \text{ subject to } U^{\top} F F^{\top} U = V^{\top} G G^{\top} V = I_k
\end{equation*}
where $F = \left(f(x_1),\dots, f(x_N)\right),G = \left( g(y_1),\dots,g(y_n)\right)$ are matrices whose columns are images of the training data under functions $f,g$ defined by neural networks in some class of functions $\mathcal{F},\mathcal{G}$ with input and output dimensions $(p,d_x),(q,d_y)$ respectively. Observe that this optimisation is really targeting the population problem
\begin{equation*}
    \max_{f\in\mathcal{F},g\in\mathcal{G}, U \in \R^{d_x \times k},V \in \R^{d_y \times k}} \tr ( U^{\top} \Sigma_{fg} V ) \quad \text{ subject to } U^{\top} \Sigma_{ff} U = V^{\top} \Sigma_{gg}^{\top} V = I_k
\end{equation*}
Where $\Sigma_{ff} = \operatorname{Var}(f(X)),\Sigma_{f,g} = \operatorname{Cov}(f(X),g(Y)),\Sigma_{gg}=\operatorname{Var}(g(Y))$. 
We now abuse notation to write $R_k(f(X),g(Y))$ to correspond to the sum of the first $k$ canonical correlations of the pair of random variables $f(X),g(Y)$.
It is well known that if we fix $f,g$ in the above then the optimisation problem defines the top-$k$ subspace for CCA.
So we can write the optimisation as
\begin{equation*}
    \max_{f\in\mathcal{F},g\in\mathcal{G}} R_k(f(X),g(X))
\end{equation*}
But then using \cref{prop:subspace-charac}, this optimisation is also equivalent to
\begin{equation}\label{eq:dcca-our-subspace-charac}
    \max_{f\in\mathcal{F},g\in\mathcal{G},W \in \R^{(d_x+d_y)\times k}} \tr\left( W^{\top} A_{fg} W \: (2 \, I_k - W^{\top} B_{fg} W) \right)
\end{equation}
where we define
\begin{equation*}
    A_{fg} = \begin{pmatrix}0 &\Sigma_{fg} \\ \Sigma_{gf} & 0\end{pmatrix}, \qquad
	B_{fg} = \begin{pmatrix}\Sigma_{ff} & 0 \\ 0 & \Sigma_{gg}\end{pmatrix}, \qquad
	d=p+q.
\end{equation*}
We have now almost recovered the form of (\ref{eq:dcca-usum}), the only difference is that there is an optimisation over W in the above. To finish the derivation we follow \cite{wang2015stochastic} and define the augmented function classes:
\begin{equation*}
    \tilde{\mathcal{F}} = \{\tilde{f} = U^{\top} f : f \in \mathcal{F}, U\in \R^{d_x \times k}\}, \quad
    \tilde{\mathcal{G}} = \{\tilde{g} = V^{\top} g : g \in \mathcal{G}, V\in \R^{d_y \times k}\}
\end{equation*}
For any $W \in \R^{(d_x+d_y)\times k}$ there exist unique $U\in \R^{d_x \times k}, V\in \R^{d_y \times k}$ with $W^{\top} = (U^{\top}, V^{\top})$. Also for $\tilde{f} = U^{\top} f, \tilde{g} = V^{\top} g$ it follows from definition of covariance that
\begin{equation*}
    U^{\top} \Sigma_{ff} U = \Sigma_{\tilde{f}\tilde{f}}, \quad
    U^{\top} \Sigma_{fg} U = \Sigma_{\tilde{f}\tilde{g}}, \quad
    V^{\top} \Sigma_{gg} V = \Sigma_{\tilde{g}\tilde{g}}
\end{equation*}
so indeed we can write (\ref{eq:dcca-our-subspace-charac}) as
\begin{equation}
    \max_{f\in\tilde{\mathcal{F}},g\in\tilde{\mathcal{G}}} \tr\left( A_{\tilde{f}\tilde{g}} \: (2 \, I_k - B_{\tilde{f}\tilde{g}} ) \right)
\end{equation}
which precisely matches our objective in (\ref{eq:dcca-usum}).

We now comment on this analysis: the definition from \cite{wang2015stochastic} proposes DCCA to find a pair of low-dimensional feature maps under which the two sets of data are highly correlated. Intuitively, this analysis says that if we take a sufficiently expressive class of neural networks, we only need to consider a $k$ dimensional latent space to recover the top-$k$ subspace of `deep canonical directions'. Note also that one only needs to apply a $k$-dimensional classical CCA to recover the top-$k$ directions from this subspace. Finally, we warn that in general these directions may be highly non-unique, and that many of the nice properties of CCA are dependent on the structure of Euclidean space and do not hold for DCCA.

\subsection{Derivation of SGHA algorithm \texorpdfstring{from \cite{chen2019constrained}}{Chen 2019}}\label{sec:chen}

The Lagrangian function in \citet{chen2019constrained} corresponding to (\ref{eq:subspace-gep}) is given as:

\begin{align}\label{eq:sghqlagrangeutil}
 \mathcal{L}(W, \textcolor{blue}{\Gamma})&=\operatorname{tr}\left(W^{\top} A W\right)-\left\langle \textcolor{blue}{\Gamma}, W^{\top} B W-I\right\rangle
\end{align}
Differentiating with respect to $W$ gives
\begin{equation*}
    2 A W - 2 B W \Gamma = 0
\end{equation*}
Left multiplying by $W^T$ and using the constraint $W^T B W = I_k$ shows that at any stationary point we have
\begin{equation}\label{eq:lagr-at-optimum}
    \Gamma = W^T A W
\end{equation}
They then plug this value of $\Gamma$ into a gradient descent step for $W$ to obtain an update direction:
\begin{equation*}
    \Delta^\text{SGHA}(W) = - A W - B W (W^T A W)
\end{equation*}
(where we follow their exposition and drop the factor of 2 at this point). Note that this technique of plugging in the optimal dual variable is non-standard to our knowledge. The algorithm needs their theoretical results for more concrete justification.

\subsection{Pseudo-Utilities in Previous Work}\label{sec:pseudo-utils}

Recall we defined the pseudo-utility of GHA-GEP to be

\begin{align}
\mathcal{PU}_{i}^{\text{GHA-GEP}}(w_i | w_{j<i}, \bGam ) 
&=\hat{w}_{i}^{\top}A\hat{w}_{i}
+\textcolor{blue}{\Gamma_{ii}} (1 - \hat{w}_{i}^{\top}B\hat{w}_{i})
-2\sum_{j< i} \textcolor{blue}{\Gamma_{ij}} \hat{w}_{j}^{\top}B\hat{w}_{i}
\end{align}

is closely related to expressions in previous work. Consider first SGHA. The Lagrangian function in \citet{chen2019constrained} is given above in (\ref{eq:sghqlagrangeutil}). 

Considering a single `player' this utility can be written:

\begin{align}
\mathcal{PU}_{i}^{\text{SGHA}}(w_i , \bGam ) 
&=\hat{w}_{i}^{\top}A\hat{w}_{i}
+\textcolor{blue}{\Gamma_{ii}} (1 - \hat{w}_{i}^{\top}B\hat{w}_{i})
-\sum_{j} \textcolor{blue}{\Gamma_{ij}} \hat{w}_{j}^{\top}B\hat{w}_{i}
\end{align}

We can also write the updates of $\mu$-EigenGame \cite{gemp2021} as a Lagrangian pseudo-utility. Note that this is a slightly different expression to that given by the authors.

\begin{align}
\mathcal{PU}_{i}^{\mu}(w_i | w_{j<i}, \bGam) 
&=\hat{w}_{i}^{\top}A\hat{w}_{i}
-\sum_{j<i} \textcolor{blue}{\Gamma_{ij}} \hat{w}_{j}^{\top}\hat{w}_{i}
\end{align}

\subsection{Utility Shape}\label{sec:utilityshape}

\begin{lemma}
Let $\hat{w}_i=m(\cos \left(\theta_i\right) w_i+\sin \left(\theta_i\right) \Delta_i)$ where $\hat{w}_{i}^{\top} B\hat{w}_{i} = m$, then:

\begin{align}\label{eq:sinusoidagain}
\mathcal{U}_i\left(\hat{w}_i,w_{j<i}\right)=\mathcal{U}_i\left(mw_i,w_{j<i}\right)-\sin^2(\theta_i)(\mathcal{U}_i\left(mw_i,w_{j<i}\right)-\mathcal{U}_i\left(m\Delta_i,w_{j<i}\right))
\end{align}
\end{lemma}

We will show that this result follows from similar logic to \citet{gemp20} once the scaling factor $m$ is accounted for.

\begin{proof}
Let $\Delta_i=\sum_{l=1}^d p_l w_l,\|p\|=1$. Decomposing the utility function for player $i$ we have:

\begin{align}
\mathcal{U}_i\left(\hat{w}_i,w_{j<i}\right)&=2\langle\hat{w}_{i},A\hat{w}_{i}\rangle-\langle\hat{w}_{i},B\hat{w}_{i}\rangle\langle\hat{w}_{i},A\hat{w}_{i}\rangle-2\sum_{j< i}\langle\hat{w}_{i},Bw^{(j)}\rangle\langle w^{(j)},A\hat{w}_{i}\rangle\\
&=(2m-m^2)(\cos^2(\theta_i)\lambda_{ii} + \sin^2(\theta_i)\langle\Delta_i,\lambda\Delta_i\rangle) \notag\\&- 2m\sum_{j< i}\left\langle\cos \left(\theta_i\right) w_i+\sin \left(\theta_i\right) \Delta_i, Aw_j\right\rangle\left\langle\cos \left(\theta_i\right) w_i+\sin \left(\theta_i\right) \Delta_i, Bw_j\right\rangle\\
&=(2m-m^2)(\cos^2(\theta_i)\lambda_{ii} + \sin^2(\theta_i)\langle\Delta_i,\lambda\Delta_i\rangle) \notag\\&- 2m\sum_{j< i}\sin ^2(\theta_i)\left\langle\Delta_i, Aw_j\right\rangle\left\langle\Delta_i, Bw_j\right\rangle\\
&=(2m-m^2)\lambda_{ii}\notag\\ &- (2m-m^2)\sin^2(\theta_i) \lambda_{ii} + \sin^2(\theta_i) ((2m-m^2)\langle\Delta_i,A\Delta_i\rangle\notag\\&-2m\sum_{j< i}\left\langle\Delta_i, Aw_j\right\rangle\left\langle\Delta_i, Bw_j\right\rangle\\
&=(2m-m^2)\lambda_{ii}\notag\\&-\sin^2(\theta_i)((2m-m^2)\lambda_{ii}+(2m-m^2)\langle\Delta_i,A\Delta_i\rangle-2m\sum_{j< i}\left\langle\Delta_i, Aw_j\right\rangle\left\langle\Delta_i, Bw_j\right\rangle)\\
&=u_i\left(mw_i,w_{j<i}\right)-\sin^2(\theta_i)(\mathcal{U}_i\left(mw_i,w_{j<i}\right)-\mathcal{U}_i\left(m\Delta_i,w_{j<i}\right))
\end{align}

\end{proof}

\subsection{Utility as a projection deflation}\label{sec:defl}
\begin{align}\label{eq:projectiondeflation}
\mathcal{U}_{i}^{\delta}&=2\hat{w}_{i}^{\top}[\overbrace{I-\sum_{j< i} B \hat{w}_{j} \hat{w}_{j}^{\top}}^{\text {projection deflation}}] A \hat{w}_{i} - \hat{w}_{i}^{\top}B\hat{w}_{i}\hat{w}_{i}^{\top} A\hat{w}_{i}
\end{align}

Analogously to the previous work in $\alpha$-EigenGame \citep{gemp20}, the matrix $[I-\sum_{j\leq i} B \hat{w}_{j} \hat{w}_{j}^{\top}]$ has a natural interpretation as a projection deflation.

\section{\texorpdfstring{Vectorized $\delta$-EigenGame}{Vectorized delta EigenGame}}\label{sec:vectorized}

\begin{algorithm}[H]
   \caption{Vectorized $\delta$-EigenGame}
   \label{alg:deltaEigenGamevec}
\begin{algorithmic}
   \STATE {\bfseries Input:} data stream $Z_t$ consisting of $b$ samples from $z_n$, learning rate $\eta$
   \FOR{$t=1$ {\bfseries to} $T$}
    \STATE Construct independent unbiased estimates $\hat{A}$ and $\hat{B}$ from $Z_t$
    \STATE Rewards $\leftarrow 2A\hat{W}$
    \STATE Penalties $\leftarrow B\hat{W}\operatorname{triu}(\hat{W}^{\top}A\hat{W})$
    \STATE $\tilde{\nabla} \leftarrow$ Rewards - Penalties
    \STATE $\hat{W}^{\prime} \leftarrow \hat{W}+\eta_{t} \tilde{\nabla}$
   \ENDFOR
\end{algorithmic}
\end{algorithm}

Where $\operatorname{triu}$ returns a matrix with the entries below the main diagonal set to zero.

\section{A connection to self-supervised learning methods}\label{sec:ssl}

 Reorganizing our update equation (\ref{eq:ourupdate}), we find that the intuition of our method can also be understood as three terms encouraging variance in $A$, penalizing variance in $B$, and discouraging covariance.

\begin{align}
\Delta_{i}^{\delta}=\overbrace{A \hat{w}_{i}}^{\text{Reward}} -\overbrace{B\hat{w}_{i}\left(\hat{w}_{i}^{\top} A \hat{w}_{i}\right)}^{\text{Variance Penalty}} - \overbrace{\sum_{j < i}B\hat{w}_{j}\left(\hat{w}_{j}^{\top} A \hat{w}_{i}\right)}^{\text{Orthogonality Penalty}} \label{eq:ourupdatessl}
\end{align}

The motivation is similar to that in recent work in self-supervised learning \citep{zbontar2021barlow} and in particular the VICReg method in \citet{bardes2021vicreg}. Recent work has shown links between several self-supervised learning approaches and classical spectral embedding methods \citep{balestriero2022contrastive}, some of which could be represented by GEPs. Like CCA, many self-supervised learning approaches are based on finding a function which is invariant to an image and its augmented version i.e. the learnt representations of both are correlated. 

\section{Experiment Details}

\subsection{Total Correlation Captured (TCC)}\label{sec:tccdef}

This is the sum of the canonical correlations of the learnt representation (i.e. the sum of the top-k canonical correlations of $X$ and $Y$).

\subsection{Proportion of Correlation Captured (PCC)}\label{sec:pccdef}

This is the sum of the canonical correlations of the learnt representation as a proportion of the sum of the canonical correlations of the learnt representation using the population ground truth (i.e. the sum of the top-k canonical correlations of $X$ and $Y$).

\subsection{Stochastic CCA}

The latter two datasets are formed from left and right halves of the canonical datasets \citep{lecun2010mnist,krizhevsky2009learning}. With the same initialization for all methods, we trained for 10 epochs on each dataset with a mini-batch size of 128 and illustrate the models with the best performance in the validation set.

\subsection{Stochastic CCA Hyperparameters}\label{sec:hparams}

Learning rate was tuned from $\eta=(10^{-1},10^{-2},10^{-3},10^{-4},10^{-5})$ and $\gamma$-EigenGame parameter $\gamma$ was tuned from the same range. We used Jax \citep{deepmind2020jax} to optimize the linear CCA models using the Jaxline framework. We used WandB \citep{wandb} for experiment tracking to develop insights for this paper.

\subsection{Deep CCA}

We use two variants of paired MNIST datasets. The first is identical to the split MNIST dataset in the previous section. The second harder problem is closely related to an experiment by \citet{wang2015deep}. Their `noisy' paired MNIST data takes two different digits with the same class. The first is rotated randomly while the second has additive gaussian noise. Finally, we use the X-Ray Microbeam (XRMB) dataset from \citet{arora2016stochastic}. For all of the datasets, we use two encoders with 50 latent dimensions and two hidden layers with size 800 and leaky ReLu activation functions, a similar architecture to that used in \citet{wang2015stochastic}. We compare our proposed method to DCCA-NOI at mini-batch sizes of $20$ and $100$ and DCCA-STOL with mini-batch size $100$ and $500$ (DCCA-STOL cannot be used for mini-batch sizes less than the number of latent dimensions).

\subsection{DCCA Hyperparameters}

We trained for 30 epochs on each dataset with mini-batch sizes of 20 and 100. Learning rate was tuned from $\eta=(10^{-1},10^{-2},10^{-3},10^{-4},10^{-5})$ and the DCCA-NOI parameter $\rho$ was tuned between 0 and 1. We use PyTorch \citep{NEURIPS2019_9015} with the Adam optimizer \citep{kingma2014adam}.

\section{Additional Experiments}

\subsection{Stochastic CCA with smaller mini-batch sizes}\label{sec:ccaextra}

In this section we repeat the experiments described in the main text with smaller mini-batch sizes (64 and 32).

Results for mini-batch sizes 32 and 64 are broadly similar to those in the main text for mini-batch size 128. In the MNIST data we can see again that there is a tradeoff between speed of convergence in early iterations and the quality of the solution.

\begin{figure}[H]
     \centering
     \begin{subfigure}[b]{0.31\textwidth}
         \centering
         \includegraphics[width=\textwidth]{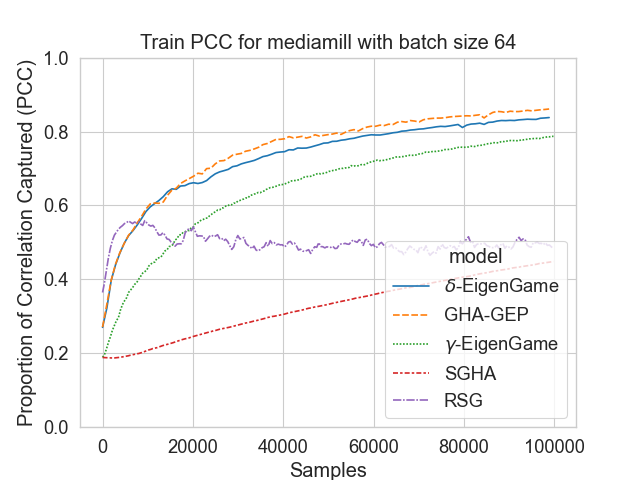}
         \label{fig:ccamediamill64}
     \end{subfigure}
     \hfill
     \begin{subfigure}[b]{0.31\textwidth}
         \centering
         \includegraphics[width=\textwidth]{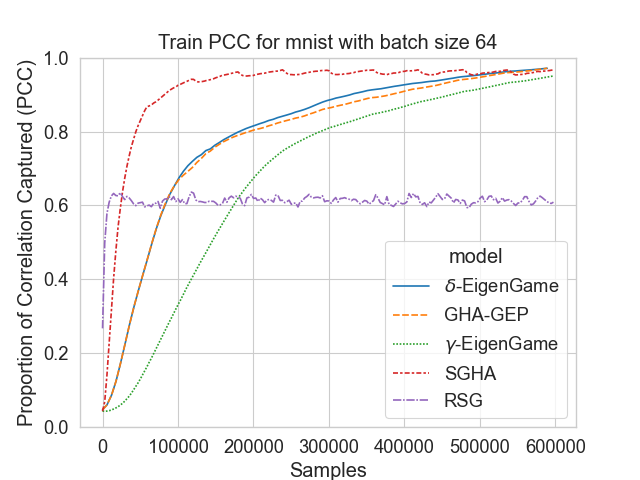}
         \label{fig:ccamnist64}
     \end{subfigure}
     \hfill
     \begin{subfigure}[b]{0.31\textwidth}
         \centering
         \includegraphics[width=\textwidth]{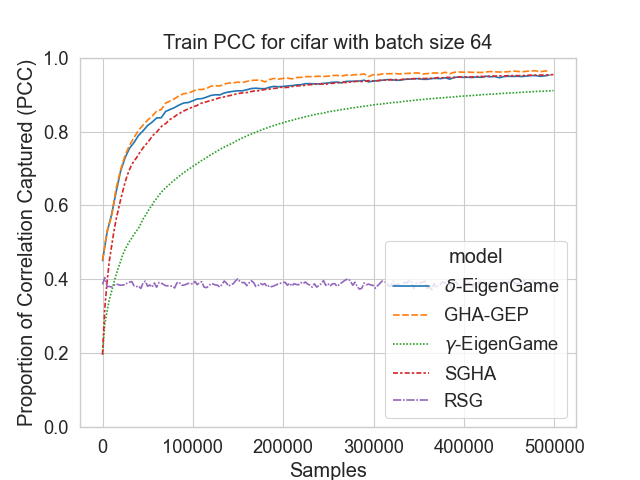}
         \label{fig:ccacifar64}
     \end{subfigure}
     \newline
     \centering
     \begin{subfigure}[b]{0.31\textwidth}
         \centering
         \includegraphics[width=\textwidth]{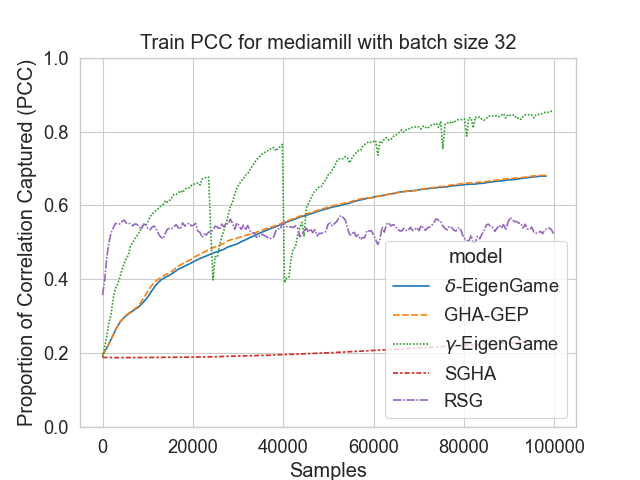}
         \label{fig:ccamediamill32}
     \end{subfigure}
     \hfill
     \begin{subfigure}[b]{0.31\textwidth}
         \centering
         \includegraphics[width=\textwidth]{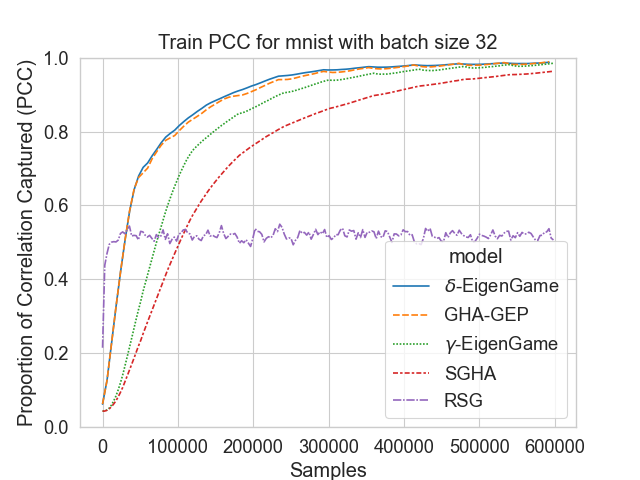}
         \label{fig:ccamnist32}
     \end{subfigure}
     \hfill
     \begin{subfigure}[b]{0.31\textwidth}
         \centering
         \includegraphics[width=\textwidth]{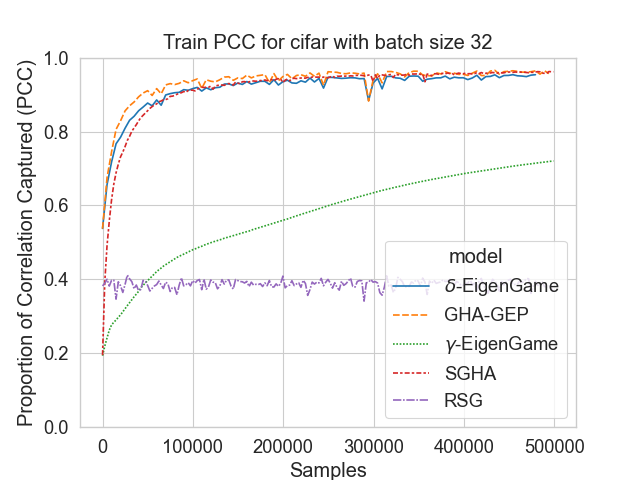}
         \label{fig:ccacifar32}
     \end{subfigure}
        \caption{CCA with stochastic mini-batches of size 64 (top) and 32 (bottom): proportion of correlation captured with respect to Scipy ground truth by $\delta$-EigenGame vs prior work. The maximum value is 1.}
        \label{fig:ccapcc32}
\end{figure}

\subsection{Partial Least Squares}

The Partial Least Squares (PLS) \citet{wold1984collinearity} problem can also be formulated as a similar GEP but with $B$ replaced by the identity matrix. PLS is equivalent to finding the singular value decomposition (SVD) of the covariance matrix $X^{\top}Y$. It has an interpretation as a (infinitely) ridge regularised CCA where the covariance matrices $\Sigma_{XX}$ and $\Sigma_{YY}$ are replaced by identity matrices; this corresponds to assuming no collinearity between variables.

\subsubsection{PLS with stochastic mini-batches}

In this experiment we compare our method to the Stochastic Power method \cite{arora2016stochastic}, $\gamma$-EigenGame, and SGHA for the stochastic PLS problem.

For these experiments we use the Proportion of Variance captured (PV). This is the sum of the singular values of the learnt representation using each stochastic optimisation method as a proportion of the sum of the singular values of the learnt representation using the population ground truth (i.e. the sum of the top-k singular values of the covariance matrix $X^{\top}Y$).

Figure \ref{fig:plspv} shows that all of the methods perform similarly in terms of variance captured across the datasets. While the stochastic power method is very fast to converge in the MNIST and CIFAR data, it solutions can be suboptimal. The performance of $\delta$-EigenGame is arguably more suprising for the PLS problem because both the Stochastic Power method and $\gamma$-EigenGame explicitly enforce the constraints  at each iteration whereas $\delta$-EigenGame only enforces the constraint via penalty terms. 

\begin{figure}[H]
     \centering
     \begin{subfigure}[b]{0.32\textwidth}
         \centering
         \includegraphics[width=\textwidth]{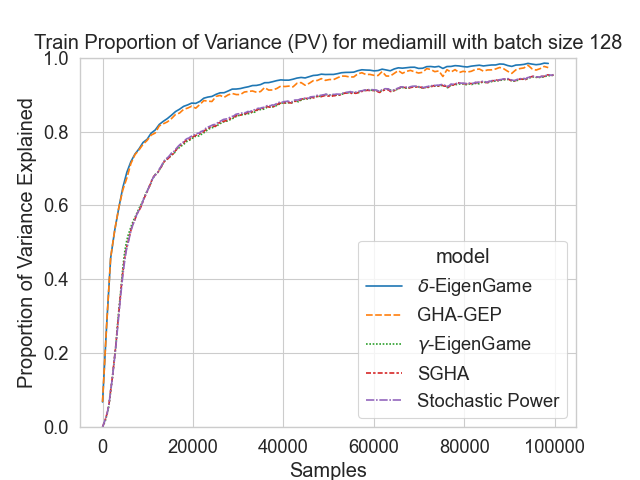}
         \label{fig:plsmediamill}
     \end{subfigure}
     \hfill
     \begin{subfigure}[b]{0.32\textwidth}
         \centering
         \includegraphics[width=\textwidth]{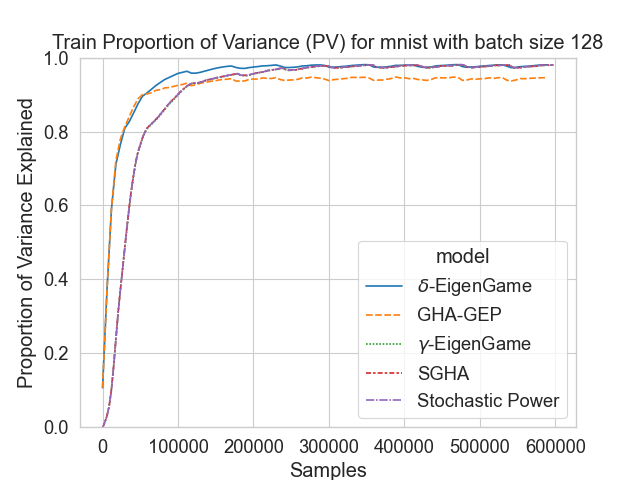}
         \label{fig:plsmnist}
     \end{subfigure}
     \hfill
     \begin{subfigure}[b]{0.32\textwidth}
         \centering
         \includegraphics[width=\textwidth]{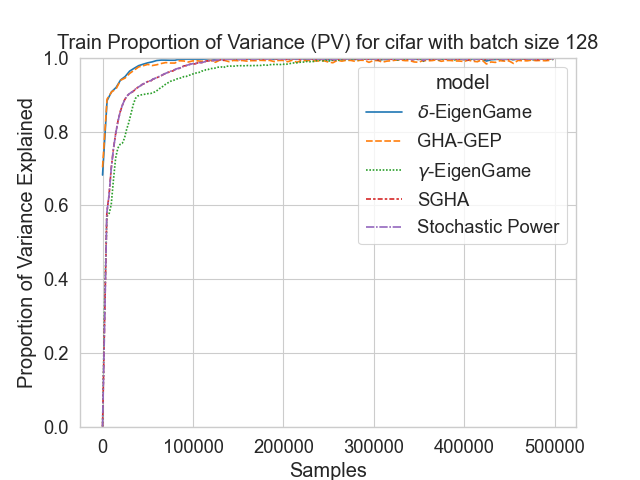}
         \label{fig:plscifar}
     \end{subfigure}
        \caption{PLS with stochastic mini-batches: proportion of variance captured with respect to Scipy ground truth by $\delta$-EigenGame vs prior work. The maximum value is 1.}
        \label{fig:plspv}
\end{figure}

\begin{figure}[H]
     \centering
     \begin{subfigure}[b]{0.31\textwidth}
         \centering
         \includegraphics[width=\textwidth]{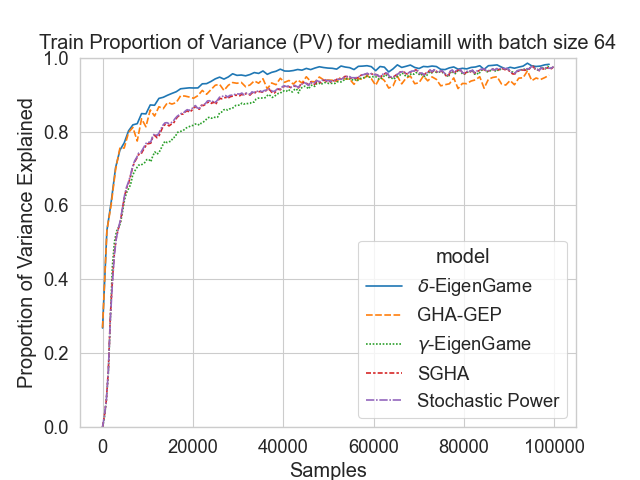}
         \label{fig:plsmediamill64}
     \end{subfigure}
     \hfill
     \begin{subfigure}[b]{0.31\textwidth}
         \centering
         \includegraphics[width=\textwidth]{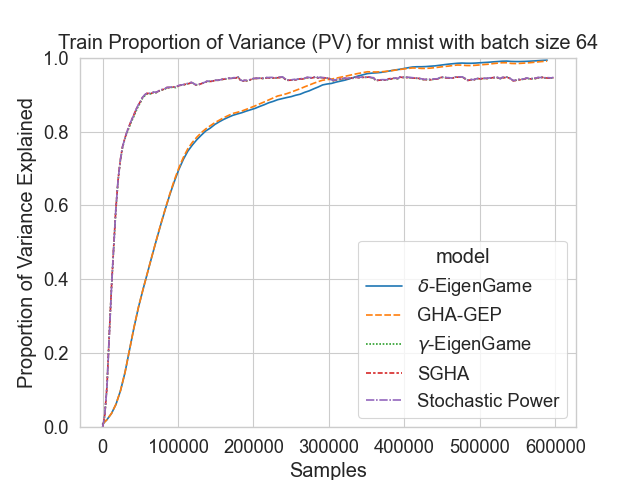}
         \label{fig:plsmnist64}
     \end{subfigure}
     \hfill
     \begin{subfigure}[b]{0.31\textwidth}
         \centering
         \includegraphics[width=\textwidth]{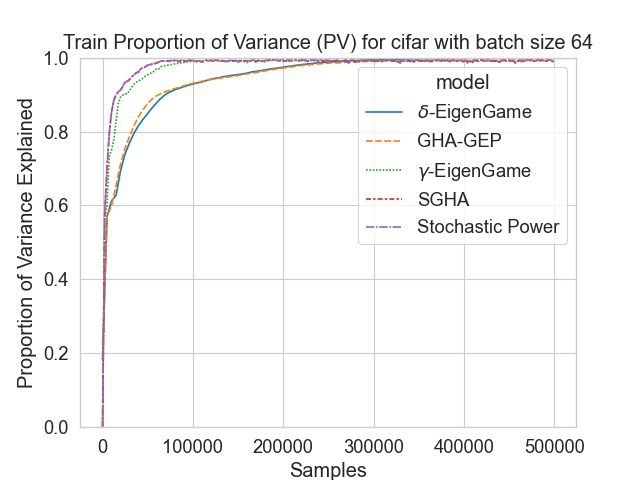}
         \label{fig:plscifar64}
     \end{subfigure}
     \newline
     \centering
     \begin{subfigure}[b]{0.31\textwidth}
         \centering
         \includegraphics[width=\textwidth]{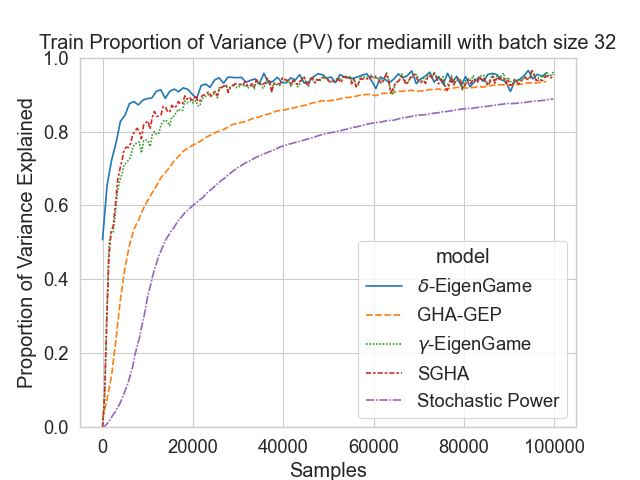}
         \label{fig:plsmediamill32}
     \end{subfigure}
     \hfill
     \begin{subfigure}[b]{0.31\textwidth}
         \centering
         \includegraphics[width=\textwidth]{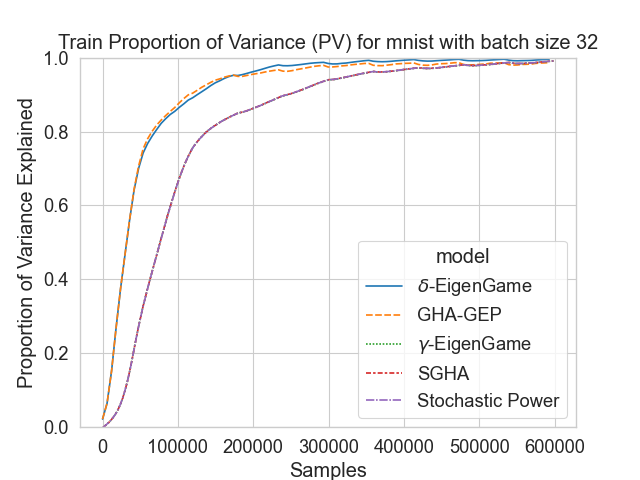}
         \label{fig:plsmnist32}
     \end{subfigure}
     \hfill
     \begin{subfigure}[b]{0.31\textwidth}
         \centering
         \includegraphics[width=\textwidth]{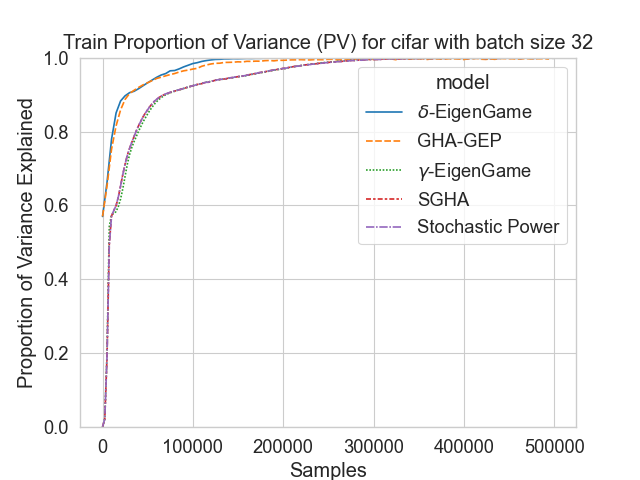}
         \label{fig:plscifar32}
     \end{subfigure}
        \caption{PLS with stochastic mini-batches of size 64 (top) and 32 (bottom): proportion of variance captured with respect to Scipy ground truth by $\delta$-EigenGame vs prior work. The maximum value is 1.}
        \label{fig:plspv32}
\end{figure}

\end{document}